\documentclass{article}

\usepackage[preprint]{neurips_2026}
\usepackage[utf8]{inputenc} % allow utf-8 input
\usepackage[T1]{fontenc}    % use 8-bit T1 fonts
\usepackage{hyperref}       % hyperlinks
\usepackage{url}            % simple URL typesetting
\usepackage{booktabs}       % professional-quality tables
\usepackage{amsfonts}       % blackboard math symbols
\usepackage{nicefrac}       % compact symbols for 1/2, etc.
\usepackage{microtype}      % microtypography
\usepackage{xcolor}         % colors
\usepackage{url}
\usepackage{graphicx}
\usepackage{amsthm}
\usepackage{amsmath,amssymb,amsfonts}

\newtheorem{proposition}{Proposition}

\usepackage{graphicx}  % 插入图片
\usepackage{subcaption}
\usepackage[most]{tcolorbox} % 加载tcolorbox并启用most库，支持所有常用功能
\usepackage{lipsum} % 用于生成测试文本
\usepackage{hyperref}
\usepackage{url}
\usepackage{float}
\usepackage{graphicx}
\usepackage{booktabs}
\usepackage[most]{tcolorbox}
\usepackage{xcolor}
\usepackage{wrapfig}
\usepackage{bm}
\usepackage{multirow}
\usepackage{amsmath}
\usepackage{algorithm}
\usepackage{amssymb}
\usepackage{caption}
\usepackage{colortbl}
\usepackage{algpseudocode}
\usepackage{subcaption}
\usepackage{enumitem}
\usepackage{sidecap}
\usepackage{rotating}
\usepackage{makecell}
\usepackage{pifont}
\usepackage{caption}
\usepackage{subcaption}
\usepackage{tabularx}

\newtheorem{lemma}{Lemma}

\usepackage{pifont}% 

\newcommand{\wt}{\widetilde}

\renewcommand{\tilde}{\wt}

\definecolor{author-color}{RGB}{12, 136, 67}
\definecolor{hidden-blue}{RGB}{194,232,247}
\definecolor{hidden-orange}{RGB}{243,202,120}
\definecolor{hidden-green}{RGB}{34,139,34}
\definecolor{hidden-pink}{RGB}{255,245,247}
\definecolor{hidden-black}{RGB}{20,68,106}
\definecolor{purple}{RGB}{144,153,196}
\definecolor{yellow}{RGB}{255,228,123}
\definecolor{tkcolor}{RGB}{189, 219, 159}
\definecolor{nicegreen}{RGB}{40, 180, 80}
\definecolor{nicered}{RGB}{200, 0, 0}

\newtcolorbox{takeaways}[2][]{
    width=\columnwidth,
    toprule=0.0pt,ß
    leftrule=0.9pt,
    bottomrule=0.9pt,
    rightrule=0.9pt,
    arc=0pt,
    colback = tkcolor!10, 
    colframe = tkcolor, 
    boxsep=0pt,left=7pt,right=7pt,top=4pt,bottom=4pt,
    fontupper=\linespread{0.1}\selectfont,
    title=#2,#1,
    before skip=0.7em,  % 盒子上方的固定间距（覆盖默认规则）
    after skip=0.7em
}

\title{The Missing Piece in Pre-trained Model Evaluation: Reward-Guided Decoding Unlocks Task-Oriented Behavior Without Parameter Updates}

\author{
  {\bf
  Shaobo Wang$^{{\color{author-color}\boldsymbol{1,2}}}$$^{\dagger}$\thanks{Equal Contribution.} \quad
    Guo Chen$^{{\color{author-color}\boldsymbol{1}}}$$^{*}$ \quad
    Ziyue Wang$^{{\color{author-color}\boldsymbol{1}}}$ \quad
    Zhengyang Tang$^{{\color{author-color}\boldsymbol{3}}}$  
  } \\ \vspace{5pt}
  {\bf \hspace{3pt}
  Qingyang Liu$^{{\color{author-color}\boldsymbol{1}}}$ \quad
    Xingzhang Ren$^{{\color{author-color}\boldsymbol{2}}}$
 \quad
  Dayiheng Liu$^{{\color{author-color}\boldsymbol{2}}}$ \quad
    Linfeng Zhang$^{{\color{author-color}\boldsymbol{1}}}$\thanks{Corresponding authors.}
  } \\ \vspace{1pt}
  {$^{\color{author-color}\boldsymbol{1}}$Shanghai Jiao Tong University \quad 
    $^{\color{author-color}\boldsymbol{2}}$Qwen Team, Alibaba Group
  } \\ \vspace{0pt}
  {$^{\color{author-color}\boldsymbol{3}}$The Chinese University of Hong Kong, Shenzhen
  }
}

\begin{document}

\maketitle

\begin{abstract}
With the rapid progress of large language models (LLMs), reliably evaluating the capabilities of pre-trained LLMs has become increasingly important. The challenge is that base pre-trained models are optimized for next-token prediction and often fail to follow instructions or produce well-formed answers under standard prompting and direct decoding. As a result, benchmark performance can conflate model capability with decoding-induced failures to produce task-oriented outputs, while exposing such behavior often relies on costly post-training. Recent decoding-only approaches attempt to reshape output distributions, but such methods can be inefficient and brittle across open-ended tasks. To address these limitations, we propose \textbf{Energy-Based Decoding (EBD)}, a training-free, reward-guided framework for activating task-oriented behaviors from frozen pre-trained LLMs across both open-ended and objective tasks. EBD augments decoding with an external lightweight reward model, steering generations toward high-utility responses while anchoring them to the pre-trained model prior through a reward-tilted target distribution. We show that EBD shifts base-model outputs toward more instruction-following behavior, increasing behavioral similarity to post-trained counterparts and enabling a fairer inference-time evaluation of accessible pre-trained-model behavior. Empirically, EBD outperforms baselines across five models and six benchmarks, improving Qwen3-8B-Base on AlpacaEval2.0 from $8.8$ to $44.5$, reducing Mistral-7B Math500 latency by $18.9\times$ relative to prior decoding work, and remaining robust to reward-model size.
\end{abstract} 

\section{Introduction}
\label{sec:intro}

\begin{figure}[tb!]
  \centering
  \vspace{-10pt}
  \includegraphics[width=0.99\columnwidth]{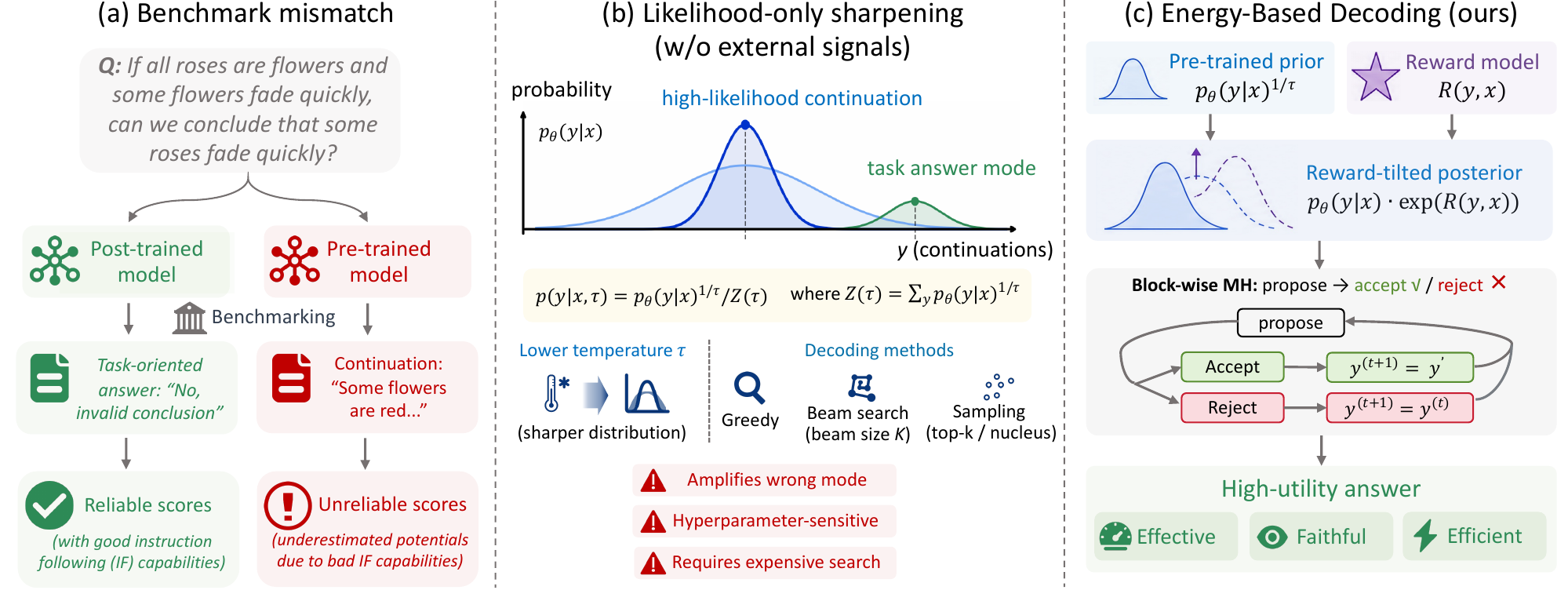}
  % \vspace{-3pt}
    \caption{\textbf{Steering, not sharpening}. 
    (a) Task queries induce a behavioral mismatch between pre-trained and post-trained LLMs: the latter interpret them as tasks, whereas the former tend to continue the prompt as plain text, yielding unreliable evaluation scores. 
    (b) Likelihood-only sharpening, including temperature reduction or search-based decoding, reshapes the prior without an external utility signal. It risks amplifying spurious high-likelihood modes, remains hyperparameter-sensitive, and entails expensive search, especially in open-ended tasks. 
    (c) EBD combines the pre-trained prior with a reward model to form a reward-tilted posterior, then applies block-wise Metropolis-Hastings updates to accept reward-improving proposals while anchoring generations to the prior.}
  \label{fig:motivation}
  \vspace{-10pt}
\end{figure}

Large language models (LLMs) have made striking progress across many domains, including mathematical reasoning, code generation, open-ended dialogue, and long-form generation~\citep{brown2020languagemodelsfewshotlearners, chowdhery2022palmscalinglanguagemodeling, zhao2026surveylargelanguagemodels}. Yet for pre-trained LLMs, standard prompting and direct decoding often provide a misleading view of what the model can do~\citep{karan2026reasoning, wang2024cotnoprompt, yue2025rlvr}. Most contemporary benchmarks evaluate models through task-oriented prompts, implicitly assuming that the model will recognize the prompt as a task, enter a problem-solving mode, and produce a well-formed answer~\citep{chang2023surveyevaluationlargelanguage, huang2023reasoninglargelanguagemodels}. This assumption is natural for post-trained models, but it breaks down for pre-trained models, which are trained primarily as continuation predictors~\citep{munjal2026instructiontunedmodelsperformbetter, irsoy-etal-2025-improving, lingteam2025flopcountsscaling300b}. Figure~\ref{fig:motivation}(a) illustrates this benchmark mismatch. The same prompt can lead a post-trained model to produce a task-oriented answer but a pre-trained model to produce a continuation-like response. As a result, poor direct-decoding performance may reflect a mismatch between continuation-style decoding and task-oriented evaluation, not simply the absence of task-relevant knowledge. \textit{\textbf{This shifts the central question toward whether inference-time decoding can activate task-oriented behavior from frozen pre-trained LLMs, enabling a fairer evaluation of what is accessible without parameter updates.}}

Activating task-oriented generation from a pre-trained model cannot be achieved reliably by simply drawing more samples from the same continuation-biased distribution. Instead, decoding must steer continuation behavior toward intentional answering while preserving the original model distribution. Recent decoding-only interventions~\citep{karan2026reasoning, wang2024cotnoprompt, wu-etal-2025-improve-decoding, nguyen2025turningheatminpsampling} suggest that reshaping the output distribution can alter model behavior and sometimes improve reasoning performance without modifying parameters. As summarized in Figure~\ref{fig:motivation}(b), however, likelihood-only shaping lacks an explicit, externally controllable utility signal. This is especially limiting for pre-trained models, whose prior is biased toward plausible continuation rather than task completion, so sharpening or truncating it can amplify the wrong mode, degrade open-ended generation quality, or require task-specific verifiers~\citep{hongler2026cognitivetraininglanguagemodels}. A closer family of baselines is Best-of-$N$ sampling or reranking, which draws independent responses and selects one using a score or task evaluator; however, repeated independent sampling can remain inefficient when most samples are continuation-like. Moreover, sample-heavy search can incur substantial computational overhead, making it impractical as a general-purpose way to improve pre-trained model behavior~\citep{wang2025efficientagentsbuildingeffective, liang2025plantainplananswerinterleavedreasoning}. \textit{\textbf{What is needed is an inference-time mechanism that can activate high-utility, task-oriented answers while remaining anchored to the pre-trained model and efficient enough for benchmark-scale evaluation.}}

To address these limitations, we propose \textbf{Energy-Based Decoding (EBD)}, a training-free inference framework for reward-guided decoding of pre-trained LLMs (Figure~\ref{fig:motivation}(c)). EBD augments decoding with an external lightweight reward model, which activates task-oriented, high-utility responses without updating model parameters. Critically, EBD is designed around three desiderata: (i) \textit{Effectiveness:} the induced distribution should closely match the reward-tilted target posterior; (ii) \textit{Faithfulness:} generation should remain anchored to the pre-trained prior, with KL drift controlled by the reward temperature and reward scale; and (iii) \textit{Efficiency:} the additional inference overhead should remain small enough for benchmark-scale use.

To realize these desiderata, we instantiate EBD as a block-wise Metropolis--Hastings (MH) sampler over model outputs. Starting from an initial response, EBD repeatedly selects a block, keeps the prefix fixed, and regenerates the remaining suffix from the base model's conditional distribution. The proposed response is then accepted or rejected by an MH rule guided by the reward model. The key design choice is that matched conditional-prior proposals and sequence-independent block selection ensure the proposal distribution equals the pre-trained conditional prior. As a result, the prior and proposal likelihood terms cancel in the acceptance ratio, and we call this proposal-prior cancellation. Under these conditions, each refinement step reduces to a reward comparison, while the Markov chain leaves the reward-tilted posterior invariant. This connects the algorithm to the three desiderata, since posterior targeting supports effectiveness, conditional generation from the pre-trained model anchors proposals to the prior, and cancellation keeps updates efficient. Contributions are as follows:

\begin{itemize}[leftmargin=*, topsep=2pt, itemsep=1pt, parsep=0pt, partopsep=0pt]
    \item We formulate reward-guided decoding as sampling from a reward-tilted posterior for activating task-oriented behavior under a frozen pre-trained prior. Under matched conditional-prior proposals, EBD enables proposal-prior cancellation, reducing each update to a reward comparison while controlling prior drift through the posterior temperature.
    \item We find that EBD does not merely improve aggregate scores; it shifts base-model behavior toward post-trained counterparts. Across five base-instruct pairs, behavioral similarity to the instruct model increases from $0.256$ under Direct to $0.385$ under EBD on average, a gain of $+0.129$.
    \item We evaluate EBD across five pre-trained models and six benchmarks. EBD consistently and substantially improves over Direct and previous decoding-based methods~\citep{karan2026reasoning} across all settings. On AlpacaEval2.0~\citep{dubois2023alpacafarm}, Qwen3-8B-Base~\citep{yang2025qwen3technicalreport} rises from $8.8$ to $44.5$, a $+405\%$ improvement in instruction-following quality; on Qwen2.5-7B~\citep{qwen2025qwen25technicalreport}, average objective score increases from $0.398$ to $0.557$; and on Mistral-7B Math500, EBD reduces per-question latency by $18.9\times$ relative to prior decoding work.
\end{itemize}

\section{Related Work}
  \label{sec:related}

\textbf{Capability evaluation of pre-trained LLMs.}
Modern benchmark evaluation often assumes that models have already been adapted to follow instructions through Supervised Fine-Tuning (SFT) or Reinforcement Learning from Human Feedback (RLHF)~\citep{brown2020languagemodelsfewshotlearners, ouyang2022instructgpt, munjal2026instructiontunedmodelsperformbetter}. For pre-trained large language models (LLMs), this assumption can be problematic because direct decoding may produce continuation-style text rather than task-oriented answers. Recent work shows that prompting and decoding choices can substantially change observed reasoning and task-solving behavior without parameter updates~\citep{wang2024cotnoprompt, yue2025rlvr, brown2024monkeys}. Thus, evaluating pre-trained models with direct decoding alone conflates task knowledge with the decoding procedure's ability to produce usable answers. Conversely, relying on post-training makes it difficult to separate behaviors inherited from pre-training from behaviors introduced by parameter updates. This leaves open the problem of improving task-oriented outputs from frozen pre-trained models at inference time.

\textbf{Inference-time decoding and search.}
A broad line of work improves model outputs by allocating additional inference-time computation, including self-consistency decoding~\citep{wang2023selfconsistency}, verifier or reward-model reranking~\citep{cobbe2021verifiers, lightman2024lets}, pass@$k$ sampling analyses~\citep{snell2024scaling, brown2024monkeys}, and sequential Markov chain Monte Carlo (MCMC) sampling from sharpened base-model distributions~\citep{karan2026reasoning}. These methods show that decoding can substantially change observed model performance, especially on reasoning tasks where high-quality trajectories may be missed by greedy or direct decoding. Best-of-$N$ sampling and reward reranking are particularly relevant baselines, since they select among independent prior samples and their effectiveness depends on drawing a good answer from the continuation-biased distribution. More generally, sample-heavy search and reranking can incur prohibitive cost, and verifier-based methods rely on reliable supervision that is often unavailable for open-ended generation. Moreover, likelihood-only sharpening uses the model's own prior as the sole steering signal, which can amplify continuation modes rather than task-oriented answers. This motivates decoding mechanisms that incorporate an external utility signal while controlling drift from the pre-trained prior.

\textbf{Reward-guided and energy-based generation.}
Reward-guided language generation is closely related to distributional control and energy-based modeling. Prior work formulates controlled generation as sampling from reward- or constraint-tilted distributions~\citep{khalifa2021gdc, korbak2022rlkl}, studies reward overoptimization under aggressive reward maximization~\citep{gao2023scaling}, and develops energy-based or Markov chain methods for controllable text generation, including residual energy-based models, energy language models, continuous Langevin decoding, constrained Metropolis--Hastings generation, and block Metropolis--Hastings samplers~\citep{deng2020residual, mireshghallah2022mixmatchlearningfreecontrollable, qin2022cold, miao2019cgmh, forristal2023blockmetropolishastingssamplercontrollable}. However, many of these approaches either train additional energy models, target hard lexical or style constraints, operate through continuous relaxations, or do not address reward-guided decoding for frozen pre-trained LLMs under benchmark-scale efficiency constraints. EBD instead treats the frozen pre-trained model as the prior and an external reward model as the energy term of an inference-time target distribution; when proposals exactly match the model's conditional prior, the MH ratio cancels the prior and proposal likelihoods, yielding efficient reward-based refinement while keeping the target distribution KL-regularized by the prior.

\begin{figure}[tb!]
  \centering
  \vspace{-5pt}
  \includegraphics[width=0.99\columnwidth]{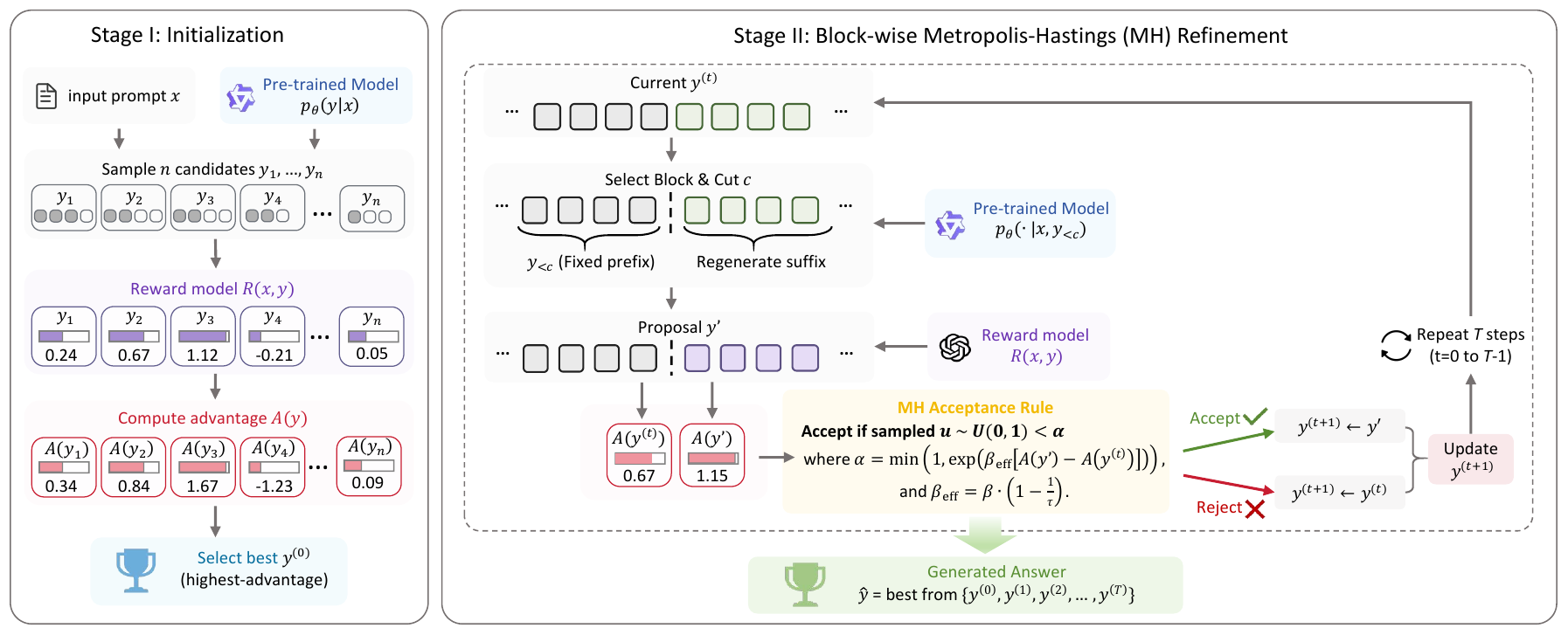}
  % \vspace{-7pt}
  \caption{EBD's pipeline follows two stages. \textbf{Stage I: Initialization} samples a small initialization pool from the pre-trained model, scores each response with the reward model, computes the prompt-level standardized advantage $\mathcal{A}_{\mathbf{x}}(\mathbf{y})=(R(\mathbf{y},\mathbf{x})-\mu_R(\mathbf{x}))/\sigma_R(\mathbf{x})$, and selects the best-in-pool response as the initial state. \textbf{Stage II: Block-wise MH refinement} repeatedly samples a cut, keeps the prefix, regenerates the suffix from the matched pre-trained conditional prior, scores the proposal, and accepts or rejects it using the reward-difference MH rule.}
  \label{fig:ebd_intro}
  \vspace{-10pt}
\end{figure}

\section{Method: Energy-Based Decoding}
  \label{sec:method}

\subsection{Preliminaries}

We consider an evaluation prompt set $\mathcal{D}$. For each prompt $\mathbf{x}\in\mathcal{D}$, $\mathcal{Y}(\mathbf{x})$ is the space of complete responses and $\mathbf{y}\in\mathcal{Y}(\mathbf{x})$ is one candidate response. A frozen pre-trained language model, together with a fixed decoding configuration, induces a full-response prior $p_\theta(\mathbf{y}\mid\mathbf{x})$. Operationally, this prior is the distribution obtained by running the base model with the same sampling temperature, stopping rule, and length limit used by the decoder. Because the prior is purely a sampling distribution, the decoder only needs to sample from it and its conditional suffix distribution, requiring no fine-tuning or training-time data access. The reward model returns a raw scalar score $R(\mathbf{y},\mathbf{x})\in\mathbb{R}$ for a complete prompt-response pair. The decoder then uses a fixed prompt-level score $S_{\mathbf{x}}(\mathbf{y})$; in our implementation this is a prompt-normalized version of $R$, defined in Eq.~\eqref{eq:advantage_score}. For the distribution-level discussion below, $q(\mathbf{y}\mid\mathbf{x})$ denotes a generic response distribution and $\mathcal{Q}$ denotes the family of distributions induced by practical decoding procedures.

A useful way to separate the design choices is to distinguish \emph{reward maximization} from \emph{distribution matching}. From an RL perspective, a complete response $\mathbf{y}$ can be viewed as the action and $S_{\mathbf{x}}(\mathbf{y})$ as a sequence-level reward. A pure reward-maximizing decoder would seek a distribution $q\in\mathcal{Q}$ that maximizes $\mathbb{E}_{\mathbf{y}\sim q}[S_{\mathbf{x}}(\mathbf{y})]$, analogous to PPO/GRPO-style reward optimization~\citep{stiennon2020summarize,ouyang2022instructgpt,shao2024deepseekmath}. This is an optimization or search view; it asks only ``which output has the largest reward?'' Without an explicit prior constraint, such a decoder can concentrate probability on low-prior reward artifacts or unnatural continuations.

EBD instead asks for a distribution that is close to the pre-trained model but tilted toward higher reward. Specifically, we define the decoding target through the KL-regularized objective
\begin{equation}
    \max_{q\in\Delta(\mathcal{Y}(\mathbf{x}))}
    \mathbb{E}_{\mathbf{y}\sim q(\cdot\mid\mathbf{x})}\!\left[S_{\mathbf{x}}(\mathbf{y})\right]
    -\frac{1}{\beta}\mathcal{D}_{\mathrm{KL}}\!\left(q(\cdot\mid\mathbf{x})\|p_\theta(\cdot\mid\mathbf{x})\right),
    \label{eq:kl_regularized_rl}
\end{equation}
where $\beta>0$ controls the reward-prior trade-off. When $\beta$ is small, the KL term dominates and the target stays close to direct base-model decoding; when $\beta$ is large, reward differences have stronger influence. Thus, the goal is not to find a single global reward maximizer, but to define a full response distribution that remains anchored to the frozen prior while reweighting outputs by utility. The unconstrained optimizer is the reward-tilted target
\begin{equation}
    \pi_\beta^*(\mathbf{y}\mid\mathbf{x})=\frac{1}{Z_\beta(\mathbf{x})}p_\theta(\mathbf{y}\mid\mathbf{x})\exp\!\left(\beta S_{\mathbf{x}}(\mathbf{y})\right),
    \label{eq:target_posterior}
\end{equation}
with $Z_\beta(\mathbf{x})=\mathbb{E}_{\tilde{\mathbf{y}}\sim p_\theta(\cdot\mid\mathbf{x})}[\exp(\beta S_{\mathbf{x}}(\tilde{\mathbf{y}}))]$. Eq.~\eqref{eq:target_posterior} is analogous to a Bayesian posterior, where the pre-trained model is the base measure and $\exp(\beta S_{\mathbf{x}}(\mathbf{y}))$ acts like a reward-derived likelihood factor. The normalizer $Z_\beta(\mathbf{x})$ is intractable but unnecessary for Metropolis--Hastings ratios.

Equivalently, the same target can be written as an energy-based distribution,
\begin{equation}
    \pi_\beta^*(\mathbf{y}\mid\mathbf{x})\propto \exp\!\left(-E_{\mathbf{x}}(\mathbf{y})\right),
    \qquad
    E_{\mathbf{x}}(\mathbf{y})=-\log p_\theta(\mathbf{y}\mid\mathbf{x})-\beta S_{\mathbf{x}}(\mathbf{y}).
    \label{eq:energy}
\end{equation}
The energy decomposes into two intuitive terms. The first, $-\log p_\theta(\mathbf{y}\mid\mathbf{x})$, penalizes responses that are implausible under the frozen model, while $-\beta S_{\mathbf{x}}(\mathbf{y})$ gives an energy bonus to responses preferred by the reward model. The sign convention is important, since increasing prior probability lowers the first term, and increasing reward lowers the second term. Low-energy responses are therefore both prior-plausible and high-utility. This is the central distinction from pure reward maximization, namely that distribution matching preserves the pre-trained prior as the reference distribution and applies reward only as an exponential tilt~\citep{korbak2022rlkl,khalifa2021gdc}, reducing the risk of unconstrained reward exploitation~\citep{gao2023scaling,skalse2022reward}. The algorithm below constructs a practical sampler for this target without exhaustive independent search.

\begin{algorithm}[tb!]
    \caption{Energy-Based Decoding (EBD) with Block-wise MH Refinement}\label{alg:ebd}
    \begin{algorithmic}
        \Statex \textbf{Input:} matched decoding prior $p_{\theta}$, reward model $R$, prompt $\mathbf{x}$, inverse temperature $\beta$, initialization pool size $n_{\mathrm{init}}$, refinement steps $K$, block count $M$, max length $L_{\max}$
        \State Sample initialization responses $\{\mathbf{y}_i\}_{i=1}^{n_{\mathrm{init}}}\sim p_\theta(\cdot\mid\mathbf{x})$
        \State Compute rewards $R_i=R(\mathbf{y}_i,\mathbf{x})$, mean $\mu_R(\mathbf{x})$, and standard deviation $\sigma_R(\mathbf{x})$
        \State Define $\mathcal{A}_{\mathbf{x}}(\mathbf{y})=(R(\mathbf{y},\mathbf{x})-\mu_R(\mathbf{x}))/\sigma_R(\mathbf{x})$
        \State Initialize $\mathbf{y}\leftarrow \mathop{\arg\max}_{\mathbf{y}_i}\mathcal{A}_{\mathbf{x}}(\mathbf{y}_i)$
        \For{$t = 0$ to $K-1$}
            \State Partition current response $\mathbf{y}$ into $M$ contiguous blocks
            \State Uniformly sample a block boundary / cut position $c$ from the admissible boundaries
            \State Propose a new suffix $\mathbf{y}'_{\ge c}\sim p_\theta(\cdot\mid\mathbf{x},\mathbf{y}_{<c})$
            \State Form proposal $\mathbf{y}'\leftarrow (\mathbf{y}_{<c},\mathbf{y}'_{\ge c})$, truncated by the stop rule or max length $L_{\max}$
            \State Compute $\mathcal{A}_{\mathbf{x}}(\mathbf{y}')$ using the fixed $\mu_R(\mathbf{x})$ and $\sigma_R(\mathbf{x})$
            \State $\alpha \leftarrow \min\left(1,\exp\left(\beta[\mathcal{A}_{\mathbf{x}}(\mathbf{y}')-\mathcal{A}_{\mathbf{x}}(\mathbf{y})]\right)\right)$
            \State Draw $u\sim\mathrm{Uniform}(0,1)$
            \If{$u\leq\alpha$}
                \State $\mathbf{y}\leftarrow\mathbf{y}'$
            \EndIf
        \EndFor
        \Statex \textbf{Output:} refined response $\mathbf{y}$
    \end{algorithmic}
\end{algorithm}

\subsection{The Energy-Based Decoding Algorithm}

Figure~\ref{fig:ebd_intro} summarizes EBD as a two-stage pipeline. Stage I initializes the chain by sampling, scoring, normalizing, and selecting an initial response. Stage II repeatedly refines that response by cutting it into a prefix and suffix, regenerating the suffix, scoring the proposal, and accepting or rejecting it. The main inputs are the frozen base model, the reward model, the prompt, the inverse temperature $\beta$, the initialization pool size $n_{\mathrm{init}}$, the number of refinement steps $K$, the number of blocks $M$, and the maximum generation length $L_{\max}$. Algorithm~\ref{alg:ebd} gives the corresponding pseudocode.

\noindent \textbf{Stage I: Initialization.}
EBD first samples an initialization pool $\{\mathbf{y}_i\}_{i=1}^{n_{\mathrm{init}}}$ from the frozen prior, $\mathbf{y}_i\sim p_\theta(\cdot\mid\mathbf{x})$. It then scores each response with the reward model, $R_i=R(\mathbf{y}_i,\mathbf{x})$, and computes prompt-level normalization statistics $\mu_R(\mathbf{x})$ and $\sigma_R(\mathbf{x})$ from this pool. Conditional on this initialization pool, the score is fixed during refinement. The implementation score is the standardized advantage
\begin{equation}
    \mathcal{A}_{\mathbf{x}}(\mathbf{y})=\frac{R(\mathbf{y},\mathbf{x})-\mu_R(\mathbf{x})}{\sigma_R(\mathbf{x})},
    \qquad S_{\mathbf{x}}(\mathbf{y})=\mathcal{A}_{\mathbf{x}}(\mathbf{y}).
    \label{eq:advantage_score}
\end{equation}
Statistics are fixed for the rest of decoding. EBD initializes the chain at the best response in the pool,
\begin{equation}
    \mathbf{y}^{(0)}=\mathop{\arg\max}_{\mathbf{y}_i}\mathcal{A}_{\mathbf{x}}(\mathbf{y}_i).
\end{equation}
This stage is a lightweight warm start rather than a large-$N$ reranking procedure; we use $n_{\mathrm{init}}=4$ by default. Importantly, the normalization is prompt-local, calibrating reward magnitudes for this prompt only and not introducing any training or dataset-level fitting.

\noindent \textbf{Stage II: Block-wise Metropolis-Hasting (MH) refinement.}
For refinement step $t=0,\ldots,K-1$, let $\mathbf{y}^{(t)}$ be the current full-response state. Operationally, a single refinement step proceeds by choosing a cut, keeping the prefix, sampling a new suffix, 
scoring the resulting full response, and finally accepting or rejecting the proposal.  EBD partitions the current token sequence into $M$ contiguous blocks and uses their boundaries as admissible cut positions. It then samples a cut position $c\sim P(c)$ from a fixed distribution, independent of the response content; in our implementation $P(c)$ is uniform over these admissible boundaries. EBD keeps the prefix before the cut and regenerates only the suffix from the same base-model conditional distribution used by direct decoding:
\begin{equation}
    \mathbf{y}'_{<c}=\mathbf{y}^{(t)}_{<c},
    \qquad
    \mathbf{y}'_{\ge c}\sim p_\theta(\cdot\mid\mathbf{x},\mathbf{y}^{(t)}_{<c}).
    \label{eq:block_proposal}
\end{equation}
The regenerated suffix is stopped by the usual end-of-sequence rule or by $L_{\max}$, producing a complete proposal $\mathbf{y}'$. The reward model then scores the full prompt-response pair $(\mathbf{x},\mathbf{y}')$, and the score is standardized using the same fixed $\mu_R(\mathbf{x})$ and $\sigma_R(\mathbf{x})$ from Stage I. Treating the sampled cut as an auxiliary variable, the joint proposal density is
\begin{equation}
    Q(c,\mathbf{y}'\mid\mathbf{y}^{(t)})=P(c)\,p_\theta(\mathbf{y}'_{\ge c}\mid\mathbf{x},\mathbf{y}^{(t)}_{<c}).
    \label{eq:joint_proposal}
\end{equation}
The key cancellation result is the following.
\begin{lemma}[Proposal-Prior Cancellation]
\label{lem:cancellation}
For the matched conditional-prior proposal in Eq.~\eqref{eq:joint_proposal}, the prior and proposal terms cancel for the sampled cut $c$:
\begin{equation}
\frac{p_\theta(\mathbf{y}'\mid\mathbf{x})\,Q(c,\mathbf{y}^{(t)}\mid\mathbf{y}')}
{p_\theta(\mathbf{y}^{(t)}\mid\mathbf{x})\,Q(c,\mathbf{y}'\mid\mathbf{y}^{(t)})}=1.
\end{equation}
\end{lemma}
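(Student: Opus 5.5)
The plan is to factor both full-response priors at the cut $c$ using the autoregressive chain rule and observe that the proposal densities are exactly the suffix factors. Because $\mathbf{y}'$ is formed by keeping the prefix of $\mathbf{y}^{(t)}$, the two responses share the prefix: $\mathbf{y}'_{<c}=\mathbf{y}^{(t)}_{<c}$. We first check that the reverse move is admissible with the same cut. The cut distribution $P(c)$ is fixed and independent of the response content, so the same $c$ can be drawn from state $\mathbf{y}'$. Regenerating from $\mathbf{y}'$ at $c$ keeps the prefix $\mathbf{y}'_{<c}=\mathbf{y}^{(t)}_{<c}$, and so it can produce $\mathbf{y}^{(t)}$. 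By Eq.~\eqref{eq:joint_proposal}, the reverse proposal density is therefore
\[
Q(c,\mathbf{y}^{(t)}\mid\mathbf{y}')=P(c)\,p_\theta(\mathbf{y}^{(t)}_{\ge c}\mid\mathbf{x},\mathbf{y}'_{<c})=P(c)\,p_\theta(\mathbf{y}^{(t)}_{\ge c}\mid\mathbf{x},\mathbf{y}^{(t)}_{<c}).
\]

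Next we write the priors in factored form:
\[
p_\theta(\mathbf{y}\mid\mathbf{x})=p_\theta(\mathbf{y}_{<c}\mid\mathbf{x})\,p_\theta(\mathbf{y}_{\ge c}\mid\mathbf{x},\mathbf{y}_{<c})
\]
for both $\mathbf{y}=\mathbf{y}'$ and $\mathbf{y}=\mathbf{y}^{(t)}$. The prefix factors are identical, so they cancel in the ratio. The factors $P(c)$ also cancel.

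The numerator then becomes $p_\theta(\mathbf{y}'_{\ge c}\mid\cdot)\,p_\theta(\mathbf{y}^{(t)}_{\ge c}\mid\cdot)$, and the denominator becomes $p_\theta(\mathbf{y}^{(t)}_{\ge c}\mid\cdot)\,p_\theta(\mathbf{y}'_{\ge c}\mid\cdot)$, with both suffix factors conditioned on the same context $(\mathbf{x},\mathbf{y}^{(t)}_{<c})$. These are the same product, so the ratio equals $1$. For the ratio to be well defined, all of these quantities must be positive. This holds because $\mathbf{y}'$ was sampled from the proposal and $\mathbf{y}^{(t)}$ lies in the support of the chain.

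The main obstacle is the word ``matched.'' The chain-rule factorization of $p_\theta$ holds only if the suffix sampler is literally the conditional of the full-response prior. That means the same sampling temperature, stopping rule, and length limit $L_{\max}$, with truncation counted relative to the full response length rather than the suffix length. Without this, the suffix factors are not the conditionals of $p_\theta(\cdot\mid\mathbf{x})$. I would handle this by invoking the paper's operational definition of $p_\theta$ as the distribution induced by the decoder's own configuration, so that the factorization holds by construction.

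A second subtlety is that the admissible cut set is built from a partition of the current response into $M$ blocks, so it could depend on the response length. I would address this by taking cuts at fixed positions, so that the set of admissible cuts, and hence $P(c)$, is the same from both states. If that is not possible, I would restrict to cuts lying within the prefix shared by the two states. Under the statement's hypothesis that $P(c)$ is sequence-independent, the cancellation goes through.
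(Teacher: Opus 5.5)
Your proposal is correct and follows the paper's proof essentially step for step: you factor both priors at the shared prefix $\mathbf{y}_{<c}$, write the forward and reverse proposals as $P(c)$ times the matched suffix conditionals, and observe that numerator and denominator are the same product. Your extra remarks do not depart from the paper's argument; they make explicit hypotheses the paper leaves implicit. These are positivity of the terms, the requirement that the suffix sampler (temperature, stop rule, and $L_{\max}$ truncation on total length) be exactly the conditional of $p_\theta$, and the caveat that block boundaries computed from the current response length would make $P(c)$ state-dependent.
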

The proof is given below. The practical implication is simple: although the target energy in Eq.~\eqref{eq:energy} contains the full-sequence prior term $-\log p_\theta(\mathbf{y}\mid\mathbf{x})$, EBD does not need to explicitly rescore full responses under the base model during refinement. Because the proposal suffix is sampled from the matched conditional prior, the prior probability terms cancel against the proposal terms. Combining Lemma~\ref{lem:cancellation} with the standard MH ratio for target $\pi_\beta^*$ yields the acceptance probability
\begin{equation}
\alpha(\mathbf{y}',c\mid\mathbf{y}^{(t)})=
\min\!\left(1,\exp\!\left(\beta\left[S_{\mathbf{x}}(\mathbf{y}')-S_{\mathbf{x}}(\mathbf{y}^{(t)})\right]\right)\right).
\label{eq:mh_accept_score}
\end{equation}
After drawing $u\sim\mathrm{Uniform}(0,1)$, EBD updates
\begin{equation}
\mathbf{y}^{(t+1)}=
\begin{cases}
\mathbf{y}', & u\le \alpha(\mathbf{y}',c\mid\mathbf{y}^{(t)}),\\
\mathbf{y}^{(t)}, & \text{otherwise}.
\end{cases}
\end{equation}
It repeats this cut-regenerate-score-accept procedure for $K$ steps and outputs the final full-response state $\mathbf{y}^{(K)}$. Thus, after the warm start, EBD is not a reranker over independent samples; it is a short Markov chain whose state is iteratively edited by suffix regeneration.

The cancellation is exact and complete for this matched setting, because the sampled cut is treated as an auxiliary variable, its sampling distribution is fully independent of response content, and the suffix proposal uses exactly the same conditional distribution as the prior term in Eq.~\eqref{eq:target_posterior}. If a state-dependent cutting heuristic or a different proposal is used, the corresponding proposal probabilities should be retained in the MH ratio. In the ideal matched-proposal setting, the chain leaves $\pi_\beta^*$ invariant asymptotically. In practice, EBD is a compute-bounded decoding procedure that runs only a small fixed number of refinement steps, trading suffix regenerations and reward-model evaluations for improved task utility while keeping the base model frozen.

% \subsection{Theoretical Properties}
% \label{subsec:theory}

The matched conditional-prior proposal yields a key efficiency property: although the target energy in Eq.~\eqref{eq:energy} contains the full-sequence prior $-\log p_\theta(\mathbf{y}\mid\mathbf{x})$, acceptance does not require explicitly rescoring it under the base model.

\begin{proposition}[Efficiency via Proposal-Prior Cancellation]
\label{prop:efficiency}
Under the matched conditional-prior proposal in Lemma~\ref{lem:cancellation}, the EBD acceptance step in Eq.~\eqref{eq:mh_accept_score} does not require explicit full-sequence likelihood re-scoring under $p_\theta(\mathbf{y}\mid\mathbf{x})$. The additional per-step cost reduces to one autoregressive suffix decode and one reward evaluation, scaling linearly with the number of refinement steps $K$.
\end{proposition}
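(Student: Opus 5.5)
The argument starts from the general Metropolis--Hastings acceptance probability for target $\pi_\beta^*$ with the cut $c$ treated as an auxiliary variable:
\[
\alpha=\min\!\left(1,\frac{\pi_\beta^*(\mathbf{y}'\mid\mathbf{x})\,Q(c,\mathbf{y}^{(t)}\mid\mathbf{y}')}{\pi_\beta^*(\mathbf{y}^{(t)}\mid\mathbf{x})\,Q(c,\mathbf{y}'\mid\mathbf{y}^{(t)})}\right).
\]
Substituting Eq.~\eqref{eq:target_posterior}, the normalizer $Z_\beta(\mathbf{x})$ cancels, and the ratio factors into a reward factor times a prior--proposal factor:
\[
\frac{\exp(\beta S_{\mathbf{x}}(\mathbf{y}'))}{\exp(\beta S_{\mathbf{x}}(\mathbf{y}^{(t)}))}\cdot\frac{p_\theta(\mathbf{y}'\mid\mathbf{x})\,Q(c,\mathbf{y}^{(t)}\mid\mathbf{y}')}{p_\theta(\mathbf{y}^{(t)}\mid\mathbf{x})\,Q(c,\mathbf{y}'\mid\mathbf{y}^{(t)})}.
\]
By Lemma~\ref{lem:cancellation}, the second factor equals $1$. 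This gives Eq.~\eqref{eq:mh_accept_score}, which depends on the state only through the two scalar scores $S_{\mathbf{x}}(\mathbf{y}')$ and $S_{\mathbf{x}}(\mathbf{y}^{(t)})$.

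From this closed form, no quantity $\log p_\theta(\cdot\mid\mathbf{x})$ for a full sequence appears in the acceptance step. In particular, we never need the prefix log-probability $\log p_\theta(\mathbf{y}_{<c}\mid\mathbf{x})$ or the suffix log-probabilities of either $\mathbf{y}'$ or $\mathbf{y}^{(t)}$, so no teacher-forced forward pass of the base model over a full response is required. To make this airtight, we should recall why the lemma applies exactly:
\begin{itemize}
\item $P(c)$ is content-independent, so it is the same in both directions.
\item By the chain rule, $p_\theta(\mathbf{y}\mid\mathbf{x})=p_\theta(\mathbf{y}_{<c}\mid\mathbf{x})\,p_\theta(\mathbf{y}_{\ge c}\mid\mathbf{x},\mathbf{y}_{<c})$ with a shared prefix.
\item The proposal suffix density is exactly the second factor.
\end{itemize}
Under the lemma's hypothesis, the cancellation holds symbolically, not only numerically.

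For the cost accounting, we enumerate the work in one iteration of Stage II of Algorithm~\ref{alg:ebd}:
\begin{itemize}
\item sampling $c$, which is $O(1)$;
\item one autoregressive decode of the suffix from $p_\theta(\cdot\mid\mathbf{x},\mathbf{y}^{(t)}_{<c})$, whose length is at most $L_{\max}$;
\item one reward evaluation $R(\mathbf{y}',\mathbf{x})$, standardized with the fixed $\mu_R(\mathbf{x}),\sigma_R(\mathbf{x})$ at $O(1)$ cost;
\item an $O(1)$ comparison against the cached score of the current state.
\end{itemize}
The cached score must be stored when a state is accepted, so $S_{\mathbf{x}}(\mathbf{y}^{(t)})$ is never recomputed. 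Each step therefore costs one suffix decode plus one reward call, each bounded by a constant depending on $L_{\max}$ and the model sizes. Summing over the $K$ steps gives total refinement cost $K$ times this bound, which is linear in $K$. Stage I contributes a $K$-independent additive term of $n_{\mathrm{init}}$ decodes and reward calls.

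The main obstacle is making precise what ``does not require re-scoring'' means, given that prefilling the prefix $\mathbf{y}_{<c}$ still involves a forward pass over it. I would handle this by distinguishing conditioning computation, which is part of any autoregressive decode and could be reused via a KV cache, from \emph{likelihood evaluation} of complete sequences, which is what a generic MH ratio would demand and what the cancellation eliminates. A second subtlety is truncation by $L_{\max}$. The prior $p_\theta$ must be defined as the truncated decoding distribution, as the Preliminaries stipulate, so that the proposal remains exactly matched and the lemma still applies.
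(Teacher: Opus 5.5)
Your proposal is correct and takes essentially the same route as the paper's proof: Lemma~\ref{lem:cancellation} reduces the Metropolis--Hastings ratio to the reward difference in Eq.~\eqref{eq:mh_accept_score}, so acceptance needs no full-sequence likelihood pass, and each step costs one suffix decode plus one reward call, giving total cost linear in $K$. You also make explicit several details the paper leaves implicit: the cancellation of $Z_\beta$, caching the current state's score so that each step needs only one reward call, and the distinction between prefix prefill and full-sequence likelihood evaluation.
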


\noindent \textit{Proof of Lemma~\ref{lem:cancellation}.}
Fix the sampled cut $c$ and write $\mathbf{y}$ for the current state $\mathbf{y}^{(t)}$. By construction, $\mathbf{y}_{<c}=\mathbf{y}'_{<c}$, so the autoregressive prior factors as
\begin{align*}
p_\theta(\mathbf{y}\mid\mathbf{x})
&=p_\theta(\mathbf{y}_{<c}\mid\mathbf{x})\,
p_\theta(\mathbf{y}_{\ge c}\mid\mathbf{x},\mathbf{y}_{<c}), \\
p_\theta(\mathbf{y}'\mid\mathbf{x})
&=p_\theta(\mathbf{y}_{<c}\mid\mathbf{x})\,
p_\theta(\mathbf{y}'_{\ge c}\mid\mathbf{x},\mathbf{y}_{<c}).
\end{align*}
Under the matched conditional-prior proposal, the forward and reverse proposal densities for the same auxiliary cut are
$Q(c,\mathbf{y}'\mid\mathbf{y})=P(c)\,p_\theta(\mathbf{y}'_{\ge c}\mid\mathbf{x},\mathbf{y}_{<c})$
and
$Q(c,\mathbf{y}\mid\mathbf{y}')=P(c)\,p_\theta(\mathbf{y}_{\ge c}\mid\mathbf{x},\mathbf{y}_{<c})$.
Substituting these factorizations into the Metropolis--Hastings ratio yields
\begin{equation*}
\frac{p_\theta(\mathbf{y}'\mid\mathbf{x})\,Q(c,\mathbf{y}\mid\mathbf{y}')}
{p_\theta(\mathbf{y}\mid\mathbf{x})\,Q(c,\mathbf{y}'\mid\mathbf{y})}=1.
\end{equation*}
The prior likelihoods and proposal densities exactly cancel. \qed

\noindent \textit{Proof of Proposition~\ref{prop:efficiency}.}
From Eq.~\eqref{eq:mh_accept_score} and Lemma~\ref{lem:cancellation}, the Metropolis--Hastings acceptance ratio reduces to the reward difference $\beta\bigl[S_{\mathbf{x}}(\mathbf{y}')-S_{\mathbf{x}}(\mathbf{y}^{(t)})\bigr]$. The full-sequence likelihood terms of the current and proposed responses cancel because the proposal is constructed from the same conditional distribution that defines the prior factor in $\pi_\beta^*$. Therefore EBD incurs no additional full-sequence likelihood pass during acceptance, in contrast to verifier-based reranking pipelines that explicitly evaluate candidate scores or likelihoods over an independent pool~\citep{cobbe2021verifiers,lightman2024lets}. The remaining overhead arises from proposal generation (one autoregressive decode of the resampled suffix per step) and from evaluating the reward function; both scale linearly in $K$. \qed

\section{Experiments}
\label{sec:experiments}

\subsection{Experimental Settings}

\paragraph{Models and baselines.}
We conducted experiments on a diverse suite of open-source large language models, specifically Meta-Llama-3-8B~\citep{grattafiori2024llama}, Mistral-7B-v0.3~\citep{jiang2023mistral7b}, Qwen2.5-7B~\citep{qwen2025qwen25technicalreport}, Qwen3-8B-Base~\citep{yang2025qwen3technicalreport}, and Olmo-3-1025-7B~\citep{olmo2026olmo3}.\footnote{The Qwen naming convention differs across releases: Qwen2.5-7B denotes the base checkpoint and Qwen2.5-7B-Instruct denotes the instruct-tuned checkpoint, whereas Qwen3-8B denotes the instruct-tuned checkpoint and Qwen3-8B-Base denotes the base checkpoint.} 
All models are of comparable scale, ranging from 7B to 8B parameters, and were evaluated under identical inference configurations unless otherwise noted. 
Our evaluation compared the proposed EBD framework against two baseline generation strategies, namely \textit{Direct} decoding and \textit{Power Sampling}~\citep{karan2026reasoning}.

\noindent \textbf{Benchmarks.} 
We evaluated the performance of pre-trained models on both objective and subjective tasks. The objective benchmarks consisted of GPQA~\citep{rein2023gpqa} for graduate-level reasoning, Math500~\citep{hendrycks2021math} for mathematical problem solving, and HumanEval~\citep{chen2021humaneval} for code generation. For subjective evaluation, we employed AlpacaEval2.0~\citep{dubois2023alpacafarm} for instruction following, MT-Bench~\citep{zheng2023mtbench} for multi-turn conversation quality, and WritingBench~\citep{wu2025writingbench} for open-ended writing proficiency.

\noindent \textbf{Subjective metrics.} 
For AlpacaEval2.0, we utilized the benchmark standard score where higher values indicated better performance. MT-Bench measured the average judge score on a scale from 1 to 10 across multiple turns and categories. WritingBench aggregated task-specific scores into an overall benchmark score to assess open-ended writing proficiency.

\noindent \textbf{Implementation details of EBD.} 
Unless otherwise specified, EBD employed a unified set of hyperparameters across all experiments, namely inverse temperature $\beta=3.5$, sampling temperature $\tau=1$, $K=12$ refinement steps, initialization pool size $n_{\mathrm{init}}=4$, block count $M=12$, and maximum generation length $L_{\max}=3072$ tokens. For all sampling-based methods, decoding was performed with sampling enabled at temperature $\tau=1$.

\noindent \textbf{Compute.} All experiments run on a cluster of 8 NVIDIA A100-SXM4-80GB GPUs with NCCL distributed backend for efficient data-parallel inference. Each GPU independently processes its assigned subset of evaluation samples; the benchmark is partitioned across the 8 GPUs without model sharding or pipeline parallelism. Latency is reported as the per-question average wall-clock time across all GPU workers, including both base-model generation and reward-model forward passes. VRAM cost is reported as peak memory allocation per GPU during inference. The core decoding loop of EBD can execute on a single A100-80GB GPU, since the per-sample computational graph does not require any inter-GPU communication.

\noindent \textbf{Software Environment.}
Our implementation is developed using Python, built on top of PyTorch 2.x and HuggingFace Transformers 4.x libraries.
All pre-trained language models are loaded in bfloat16 floating-point precision for efficient computation, and \texttt{trust\_remote\_code} is enabled as required for compatibility.
To ensure full experimental reproducibility across all runs, we set the global random seed to a fixed value of 42.
Notably, we do not employ vLLM-accelerated batched inference in our experiments; instead, all generation and decoding processes are conducted in a standard autoregressive manner using native HuggingFace framework interfaces.

\subsection{Main Results}

\begin{table}[tb!]
    \centering
    \vspace{-5pt}
    \caption{Performance comparison across objective and subjective benchmarks. Superscripts denote absolute change relative to + Direct within each backbone.}
    \label{tab:main_results_full}
    \vspace{3pt}
    \scriptsize
    \setlength{\tabcolsep}{2.4pt}
    \resizebox{\textwidth}{!}{%
    \begin{tabular}{l c c c c c c c c}
        \toprule
        \multirow{2}{*}{Method} & \multicolumn{4}{c}{\textbf{Objective Benchmarks}} & \multicolumn{4}{c}{\textbf{Subjective Benchmarks}} \\
        \cmidrule(lr){2-5} \cmidrule(lr){6-9}
        & Avg. & GPQA & Math500 & HumanEval & Avg. & Alpaca & MT-Bench & Writing \\ 
        \midrule
        %\rowcolor{gray!15}
        \multicolumn{9}{l}{\textit{Backbone: Llama3-8B}} \\ \midrule
        + Direct & 
        0.155\phantom{\textcolor{nicegreen}{$^{+0.000}$}} & 0.152\phantom{\textcolor{nicegreen}{$^{+0.000}$}} & 
        0.016\phantom{\textcolor{nicegreen}{$^{+0.000}$}} & 
        0.299\phantom{\textcolor{nicegreen}{$^{+0.000}$}} & 
        2.078\phantom{\textcolor{nicegreen}{$^{+0.000}$}} & 
        1.525\phantom{\textcolor{nicegreen}{$^{+0.000}$}} & 
        3.000\phantom{\textcolor{nicegreen}{$^{+0.000}$}} & 
        1.709\phantom{\textcolor{nicegreen}{$^{+0.000}$}} \\
        + Power Sampling & 0.195\textcolor{nicegreen}{$^{+0.040}$} & 0.157\textcolor{nicegreen}{$^{+0.005}$} & 0.008\textcolor{nicered}{$^{-0.008}$} & 0.421\textcolor{nicegreen}{$^{+0.122}$} & 1.813\textcolor{nicered}{$^{-0.265}$} & 2.342\textcolor{nicegreen}{$^{+0.817}$} & 1.889\textcolor{nicered}{$^{-1.111}$} & 1.208\textcolor{nicered}{$^{-0.501}$} \\
        \rowcolor{cyan!10}
        \textbf{+ EBD (ours)} & \textbf{0.262}\textcolor{nicegreen}{$^{+0.107}$} & \textbf{0.237}\textcolor{nicegreen}{$^{+0.085}$} & \textbf{0.042}\textcolor{nicegreen}{$^{+0.026}$} & \textbf{0.506}\textcolor{nicegreen}{$^{+0.207}$} & \textbf{3.258}\textcolor{nicegreen}{$^{+1.180}$} & \textbf{3.862}\textcolor{nicegreen}{$^{+2.337}$} & \textbf{3.683}\textcolor{nicegreen}{$^{+0.683}$} & \textbf{2.230}\textcolor{nicegreen}{$^{+0.521}$} \\
        \midrule
        %\rowcolor{gray!15}
        \multicolumn{9}{l}{\textit{Backbone: Mistral-7B}} \\
        \midrule
        + Direct & 
        0.108\phantom{\textcolor{nicegreen}{$^{+0.000}$}} & 
        0.167\phantom{\textcolor{nicegreen}{$^{+0.000}$}} & 
        0.010\phantom{\textcolor{nicegreen}{$^{+0.000}$}} & 
        0.147\phantom{\textcolor{nicegreen}{$^{+0.000}$}} & 
        2.294\phantom{\textcolor{nicegreen}{$^{+0.000}$}} & 
        2.153\phantom{\textcolor{nicegreen}{$^{+0.000}$}} & 
        2.868\phantom{\textcolor{nicegreen}{$^{+0.000}$}} & 
        1.860\phantom{\textcolor{nicegreen}{$^{+0.000}$}} \\
        + Power Sampling & 0.126\textcolor{nicegreen}{$^{+0.018}$} & 0.182\textcolor{nicegreen}{$^{+0.015}$} & 0.006\textcolor{nicered}{$^{-0.004}$} & 0.189\textcolor{nicegreen}{$^{+0.042}$} & 1.713\textcolor{nicered}{$^{-0.581}$} & 1.600\textcolor{nicered}{$^{-0.553}$} & 2.294\textcolor{nicered}{$^{-0.574}$} & 1.245\textcolor{nicered}{$^{-0.615}$} \\
        \rowcolor{cyan!10}
        \textbf{+ EBD (ours)} & \textbf{0.168}\textcolor{nicegreen}{$^{+0.060}$} & \textbf{0.241}\textcolor{nicegreen}{$^{+0.074}$} & \textbf{0.032}\textcolor{nicegreen}{$^{+0.022}$} & \textbf{0.232}\textcolor{nicegreen}{$^{+0.085}$} & \textbf{4.776}\textcolor{nicegreen}{$^{+2.482}$} & \textbf{7.249}\textcolor{nicegreen}{$^{+5.096}$} & \textbf{4.182}\textcolor{nicegreen}{$^{+1.314}$} & \textbf{2.898}\textcolor{nicegreen}{$^{+1.038}$} \\
        \midrule
        %\rowcolor{gray!15}
        \multicolumn{9}{l}{\textit{Backbone: Qwen2.5-7B}} \\
        \midrule
        + Direct & 
        0.398\phantom{\textcolor{nicegreen}{$^{+0.000}$}} & 
        0.222\phantom{\textcolor{nicegreen}{$^{+0.000}$}} & 
        0.558\phantom{\textcolor{nicegreen}{$^{+0.000}$}} & 
        0.415\phantom{\textcolor{nicegreen}{$^{+0.000}$}} & 
        5.750\phantom{\textcolor{nicegreen}{$^{+0.000}$}} & 
        8.632\phantom{\textcolor{nicegreen}{$^{+0.000}$}} & 
        5.375\phantom{\textcolor{nicegreen}{$^{+0.000}$}} & 
        3.243\phantom{\textcolor{nicegreen}{$^{+0.000}$}} \\
        + Power Sampling & 0.513\textcolor{nicegreen}{$^{+0.115}$} & 0.303\textcolor{nicegreen}{$^{+0.081}$} & 0.664\textcolor{nicegreen}{$^{+0.106}$} & 0.573\textcolor{nicegreen}{$^{+0.158}$} & 4.670\textcolor{nicered}{$^{-1.080}$} & 7.511\textcolor{nicered}{$^{-1.121}$} & 4.600\textcolor{nicered}{$^{-0.775}$} & 1.898\textcolor{nicered}{$^{-1.345}$} \\
        \rowcolor{cyan!10}
        \textbf{+ EBD (ours)} & \textbf{0.557}\textcolor{nicegreen}{$^{+0.159}$} & \textbf{0.328}\textcolor{nicegreen}{$^{+0.106}$} & \textbf{0.720}\textcolor{nicegreen}{$^{+0.162}$} & \textbf{0.622}\textcolor{nicegreen}{$^{+0.207}$} & \textbf{10.728}\textcolor{nicegreen}{$^{+4.978}$} & \textbf{20.956}\textcolor{nicegreen}{$^{+12.324}$} & \textbf{6.905}\textcolor{nicegreen}{$^{+1.530}$} & \textbf{4.322}\textcolor{nicegreen}{$^{+1.079}$} \\
        \midrule
        %\rowcolor{gray!15}
        \multicolumn{9}{l}{\textit{Backbone: Qwen3-8B-Base}} \\
        \midrule
        + Direct & 
        0.544\phantom{\textcolor{nicegreen}{$^{+0.000}$}} & 
        0.343\phantom{\textcolor{nicegreen}{$^{+0.000}$}} & 
        0.690\phantom{\textcolor{nicegreen}{$^{+0.000}$}} & 
        0.598\phantom{\textcolor{nicegreen}{$^{+0.000}$}} & 
        6.707\phantom{\textcolor{nicegreen}{$^{+0.000}$}} & 
        8.811\phantom{\textcolor{nicegreen}{$^{+0.000}$}} & 
        6.115\phantom{\textcolor{nicegreen}{$^{+0.000}$}} & 
        5.196\phantom{\textcolor{nicegreen}{$^{+0.000}$}} \\
        + Power Sampling & 0.622\textcolor{nicegreen}{$^{+0.078}$} & 0.409\textcolor{nicegreen}{$^{+0.066}$} & \textbf{0.792}\textcolor{nicegreen}{$^{+0.102}$} & 0.665\textcolor{nicegreen}{$^{+0.067}$} & 12.990\textcolor{nicegreen}{$^{+6.283}$} & 29.219\textcolor{nicegreen}{$^{+20.408}$} & 6.043\textcolor{nicered}{$^{-0.072}$} & 3.707\textcolor{nicered}{$^{-1.489}$} \\
        \rowcolor{cyan!10}
        \textbf{+ EBD (ours)} & \textbf{0.632}\textcolor{nicegreen}{$^{+0.088}$} & \textbf{0.414}\textcolor{nicegreen}{$^{+0.071}$} & \textbf{0.792}\textcolor{nicegreen}{$^{+0.102}$} & \textbf{0.689}\textcolor{nicegreen}{$^{+0.091}$} & \textbf{19.360}\textcolor{nicegreen}{$^{+12.653}$} & \textbf{44.519}\textcolor{nicegreen}{$^{+35.708}$} & \textbf{7.700}\textcolor{nicegreen}{$^{+1.585}$} & \textbf{5.862}\textcolor{nicegreen}{$^{+0.666}$} \\
        \midrule
        %\rowcolor{gray!15}
        \multicolumn{9}{l}{\textit{Backbone: Olmo-7B}} \\
        \midrule
        + Direct & 
        0.329\phantom{\textcolor{nicegreen}{$^{+0.000}$}} & 
        0.303\phantom{\textcolor{nicegreen}{$^{+0.000}$}} & 
        0.366\phantom{\textcolor{nicegreen}{$^{+0.000}$}} & 
        0.317\phantom{\textcolor{nicegreen}{$^{+0.000}$}} & 
        8.044\phantom{\textcolor{nicegreen}{$^{+0.000}$}} & 
        14.558\phantom{\textcolor{nicegreen}{$^{+0.000}$}} & 
        6.294\phantom{\textcolor{nicegreen}{$^{+0.000}$}} & 
        3.279\phantom{\textcolor{nicegreen}{$^{+0.000}$}} \\
        + Power Sampling & 0.382\textcolor{nicegreen}{$^{+0.053}$} & 0.268\textcolor{nicered}{$^{-0.035}$} & 0.464\textcolor{nicegreen}{$^{+0.098}$} & 0.415\textcolor{nicegreen}{$^{+0.098}$} & 6.386\textcolor{nicered}{$^{-1.658}$} & 12.688\textcolor{nicered}{$^{-1.870}$} & 4.789\textcolor{nicered}{$^{-1.505}$} & 1.681\textcolor{nicered}{$^{-1.598}$} \\
        \rowcolor{cyan!10}
        \textbf{+ EBD (ours)} & \textbf{0.538}\textcolor{nicegreen}{$^{+0.209}$} & \textbf{0.343}\textcolor{nicegreen}{$^{+0.040}$} & \textbf{0.704}\textcolor{nicegreen}{$^{+0.338}$} & \textbf{0.567}\textcolor{nicegreen}{$^{+0.250}$} & \textbf{13.378}\textcolor{nicegreen}{$^{+5.334}$} & \textbf{28.238}\textcolor{nicegreen}{$^{+13.680}$} & \textbf{7.273}\textcolor{nicegreen}{$^{+0.979}$} & \textbf{4.624}\textcolor{nicegreen}{$^{+1.345}$} \\
        \bottomrule
    \end{tabular}%
    }
    \vspace{-8pt}
\end{table}

\begin{table}[tb!]
    \centering
    \vspace{-4pt}
    \caption{Average per-question inference time (seconds). Direct is shown as a latency reference; speedups compare EBD against Power Sampling. Lower time and higher speedup are better.}
    \vspace{3pt}
    \label{tab:efficiency_full}
    \small
    \setlength{\tabcolsep}{4pt}
    \resizebox{\textwidth}{!}{%
    \begin{tabular}{l c c c c c c c c}
        \toprule
        \multirow{2}{*}{Backbone} & \multicolumn{4}{c}{\textbf{Objective Avg.}} & \multicolumn{4}{c}{\textbf{Subjective Avg.}} \\
        \cmidrule(lr){2-5} \cmidrule(lr){6-9}
        & \textcolor{gray}{Direct (ref.)} & Power & EBD & Speedup & \textcolor{gray}{Direct (ref.)} & Power & EBD & Speedup \\
        \midrule
        Meta-Llama-3-8B  & \textcolor{gray}{6.17} & 382.61 & \textbf{80.47} & $4.8\times$  & \textcolor{gray}{15.37} & 1372.11 & \textbf{540.00} & $2.5\times$ \\
        Mistral-7B-v0.3 & \textcolor{gray}{8.17} & 1166.87 & \textbf{66.17} & $17.6\times$ & \textcolor{gray}{24.19} & 2294.71 & \textbf{653.77} & $3.5\times$ \\
        Qwen2.5-7B      & \textcolor{gray}{9.58} & 342.99 & \textbf{93.54} & $3.7\times$  & \textcolor{gray}{9.39} & 482.95 & \textbf{205.24} & $2.4\times$ \\
        Qwen3-8B-Base   & \textcolor{gray}{11.56} & 476.74 & \textbf{127.76} & $3.7\times$ & \textcolor{gray}{21.18} & 893.76 & \textbf{414.75} & $2.2\times$ \\
        Olmo-3-1025-7B  & \textcolor{gray}{28.23} & 1481.60 & \textbf{626.04} & $2.4\times$ & \textcolor{gray}{28.52} & 1557.84 & \textbf{901.69} & $1.7\times$ \\
        \bottomrule
    \end{tabular}%
    }
    \vspace{-15pt}
\end{table}

\noindent \textbf{Objective benchmarks.} Table~\ref{tab:main_results_full} shows that EBD improves objective-task performance across all five pre-trained models. Average objective scores rise from 0.155 to 0.262 for Meta-Llama-3-8B, from 0.398 to 0.557 for Qwen2.5-7B, and from 0.329 to 0.538 for Olmo-3-1025-7B. These gains indicate that direct decoding misses task-oriented reasoning and code-generation behavior that remains accessible under reward-guided, prior-preserving inference.

\noindent \textbf{Subjective benchmarks.} As shown in Table~\ref{tab:main_results_full}, the gains are larger on open-ended benchmarks, where continuation-style failures are especially harmful. EBD increases AlpacaEval2.0 from 2.153 to 7.249 on Mistral-7B-v0.3 and from 8.811 to 44.519 on Qwen3-8B-Base. On MT-Bench, it improves Qwen2.5-7B from 5.375 to 6.905 and Qwen3-8B-Base from 6.115 to 7.700. In contrast, Power Sampling frequently reduces subjective quality, suggesting that likelihood-only search can amplify the wrong generation mode rather than activate task-oriented responses.

\begin{figure}[tb!]
  \centering
  \vspace{10pt}
  \includegraphics[width=0.99\linewidth]{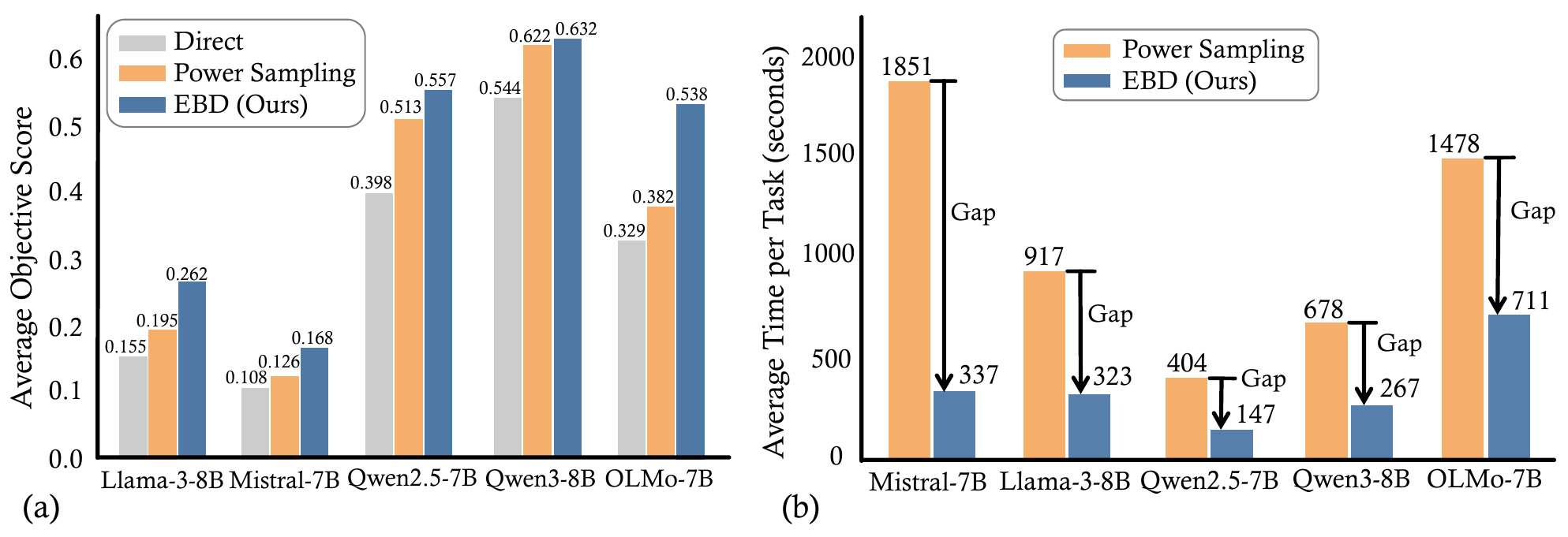}
  \vspace{-2pt}
  \caption{Quality and latency across five pre-trained base models.
  (a)~Average objective benchmark score.
  (b)~Average inference time per task.
  EBD consistently achieves the highest scores while remaining substantially faster than Power Sampling.}
  \label{fig:quality_efficiency_frontier}
  \vspace{-8pt}
\end{figure}

\noindent \textbf{Efficiency analysis.} Figures~\ref{fig:quality_efficiency_frontier}(a) and \ref{fig:quality_efficiency_frontier}(b) compare quality and latency. Direct decoding is fastest but often fails to produce usable task-oriented outputs, whereas Power Sampling is much slower and does not reliably improve subjective quality. EBD consistently achieves the highest scores while remaining substantially faster than Power Sampling, demonstrating a clear advantage. As summarized in Table~\ref{tab:efficiency_full}, EBD reduces average objective latency by $17.6\times$ over Power Sampling on Mistral-7B-v0.3.

\noindent \textbf{Behavioral alignment analysis.} We measure whether EBD activates effective success patterns that resemble post-trained variants by computing Pearson correlation of binary correctness over 1,000 samples. To further test the generality of this analysis, we additionally include Qwen2.5-14B~\citep{qwen2025qwen25technicalreport}, a larger model beyond the five main experimental backbones. Figure~\ref{fig:sample_level_shift} visualizes this alignment as stacked bars: the light segment shows the Direct baseline correlation with the instruct model, while the dark segment shows the steady EBD increment. Across all five model pairs, EBD raises the bars above their baselines, and Table~\ref{tab:sample_corr} confirms that the average correlation rises from 0.256 to 0.385. The largest shift occurs for Meta-Llama-3-8B, where correlation increases from 0.112 to 0.307, indicating that reward-guided refinement changes not only aggregate scores but also which questions the frozen model can answer under inference-time activation.

\begin{table}[tb!]
    \centering
    \vspace{-3pt}
    \begin{minipage}[t]{0.47\textwidth}
        \centering
        \vspace{0pt}
        \captionof{table}{Behavioral alignment with post-trained models. This table shows pearson correlation of binary correctness over 1,000 questions.}
        \label{tab:sample_corr}
        \scriptsize
        \setlength{\tabcolsep}{4.3pt}
        \begin{tabularx}{\linewidth}{@{}l *{4}{>{\centering\arraybackslash}X}@{}}
            \toprule
            Model & Direct & EBD & Abs. & Rel. gain \\
            \midrule
            Meta-Llama-3-8B  & 0.112 & 0.307 & \textcolor{nicegreen}{+0.195} & \textcolor{nicegreen}{+174.1\%} \\
            Mistral-7B-v0.3  & 0.212 & 0.327 & \textcolor{nicegreen}{+0.115} & \textcolor{nicegreen}{+54.2\%} \\
            Qwen2.5-7B       & 0.297 & 0.379 & \textcolor{nicegreen}{+0.082} & \textcolor{nicegreen}{+27.6\%} \\
            Qwen2.5-14B      & 0.309 & 0.399 & \textcolor{nicegreen}{+0.090} & \textcolor{nicegreen}{+29.1\%} \\
            Olmo-3-1025-7B   & 0.349 & 0.512 & \textcolor{nicegreen}{+0.164} & \textcolor{nicegreen}{+47.0\%} \\
            \midrule
            \rowcolor{cyan!10}
            \textbf{Average} & 0.256 & \textbf{0.385} & \textbf{\textcolor{nicegreen}{+0.129}} & \textbf{\textcolor{nicegreen}{+50.4\%}} \\
            \bottomrule
        \end{tabularx}
    \end{minipage}%
    \hfill
    \begin{minipage}[t]{0.50\textwidth}
        \centering
        \vspace{0pt}
        \captionof{table}{Format adherence and response validity on Math500. VRR measures the fraction of outputs with a boxed answer. D$\to$E is Direct to EBD.}
        \vspace{3pt}
        \label{tab:vrr_math500}
        \scriptsize
        \setlength{\tabcolsep}{2.2pt}
        \begin{tabularx}{\linewidth}{@{}l *{4}{>{\centering\arraybackslash}X}@{}}
            \toprule
            Backbone & VRR D$\to$E & $\Delta$ & Acc. D$\to$E & $\Delta$ \\
            \midrule
            Llama3-8B & $3.2{\to}28.5$ & \textcolor{nicegreen}{+25.3} & $1.6{\to}4.2$ & \textcolor{nicegreen}{+2.6} \\
            Qwen2.5-7B & $75.2{\to}80.9$ & \textcolor{nicegreen}{+5.7} & $62.8{\to}64.8$ & \textcolor{nicegreen}{+2.0} \\
            Llama3.1-8B & $3.5{\to}31.0$ & \textcolor{nicegreen}{+27.5} & $1.7{\to}2.4$ & \textcolor{nicegreen}{+0.7} \\
            Llama3.1-8B+Dolly & $51.0{\to}73.0$ & \textcolor{nicegreen}{+22.0} & $4.8{\to}13.6$ & \textcolor{nicegreen}{+8.8} \\
            Llama3.1-8B+Hybrid & $54.0{\to}72.0$ & \textcolor{nicegreen}{+18.0} & $4.6{\to}15.0$ & \textcolor{nicegreen}{+10.4} \\
            \bottomrule
        \end{tabularx}
    \end{minipage}
    \vspace{-15pt}
\end{table}

\noindent \textbf{Format adherence and response validity.} We next test whether EBD remains effective on instruction-tuned models. Valid Response Rate (VRR) measures the fraction of Math500 outputs that contain the required boxed answer. We construct a Hybrid dataset of 10k carefully curated high-quality samples drawn from GSM8K~\citep{cobbe2021verifiers}, MBPP~\citep{austin2021mbpp}, ARC-Easy~\citep{clark2018arc}, and Dolly~\citep{conover2023dolly}, then separately cold-start SFT Llama3.1-8B with Hybrid and Dolly. As shown in Table~\ref{tab:vrr_math500}, EBD consistently improves VRR across both initialization conditions, revealing its robustness to the underlying SFT recipe and the initial model state. Accuracy increases alongside validity, suggesting that EBD activates more parseable and task-aligned solution modes rather than merely changing the answer format.

\noindent \textbf{Case study.} 
Appendix~\ref{app:case_studies} provides 30 representative case studies. These examples consistently show that EBD transforms otherwise unusable or socially-styled direct outputs into structured, task-aligned responses with parseable final answers, demonstrating that the VRR gains reported in Table~\ref{tab:vrr_math500} reflect a systematic improvement rather than cherry-picking of easy questions.

\begin{figure}[tb!]
    \vspace{5pt}
    \begin{minipage}{0.7\textwidth}
        \vspace{0pt}
        \includegraphics[width=\linewidth]{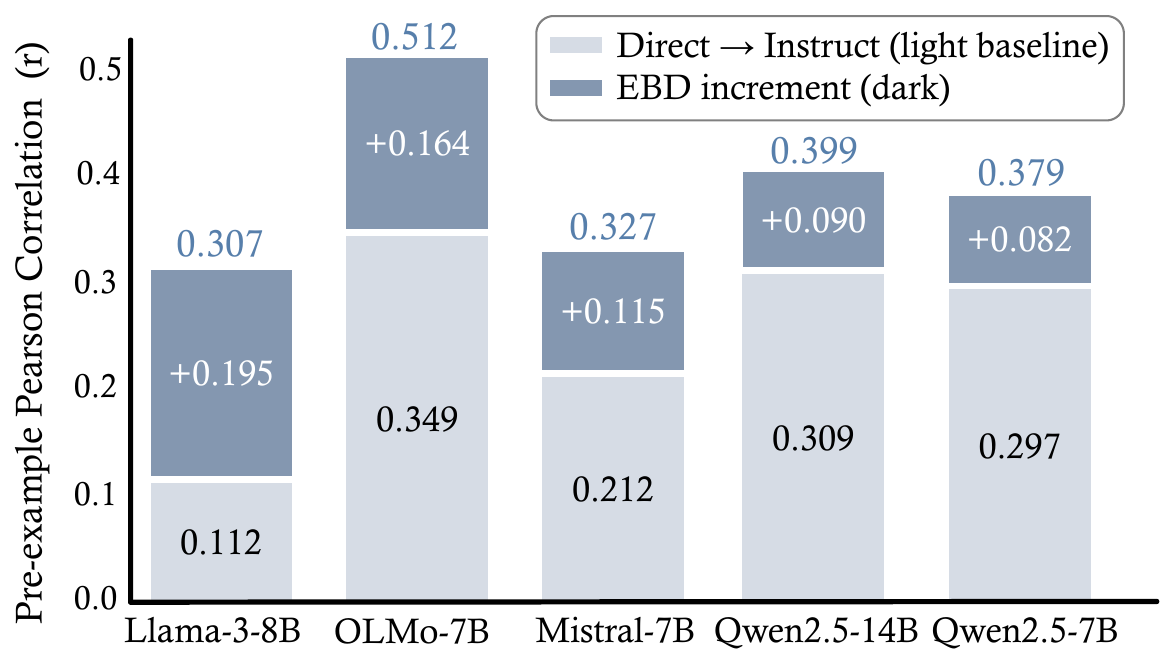}
    \end{minipage}%
    \hfill
    \begin{minipage}{0.27\textwidth}
        \vspace{0pt}
        \caption{Behavioral alignment between pre-trained models and their post-trained counterparts. The light segment shows the baseline correlation with the instruct model, while the dark segment shows the steady EBD increment. EBD consistently shifts the behavioral signature of pre-trained models toward post-trained outputs.}
        \label{fig:sample_level_shift}
    \end{minipage}
    \vspace{-7pt}
\end{figure}

\section{Discussion}
\label{sec:discussion}

\begin{table}[tb!]
    \centering
    \vspace{-3pt}
    \begin{minipage}[t]{0.49\textwidth}
        \centering
        \vspace{0pt}
        \captionof{table}{Effect of reward-model scale on EBD downstream task performance. Best values within each pre-trained model are bolded.}
        \label{tab:rm_scaling_perf}
        % \scriptsize
        \setlength{\tabcolsep}{5pt}
        \resizebox{\linewidth}{!}{%
        \begin{tabular}{l c c c c}
            \toprule
            RM Size & Math500 & GPQA & HumanEval & Writing \\
            \midrule
            \rowcolor{gray!15} \multicolumn{5}{l}{\textit{Pre-trained model: Qwen2.5-7B}} \\
            0.6B & 0.720 & \textbf{0.328} & \textbf{0.622} & \textbf{4.322} \\
            1.7B & 0.670 & 0.277 & 0.470 & 4.049 \\
            4B & \textbf{0.724} & 0.323 & 0.524 & 4.035 \\
            8B & 0.722 & 0.273 & 0.500 & 4.060 \\
            \midrule
            \rowcolor{gray!15} \multicolumn{5}{l}{\textit{Pre-trained model: Qwen3-8B-Base}} \\
            0.6B & 0.792 & \textbf{0.414} & \textbf{0.689} & \textbf{5.862} \\
            1.7B & 0.786 & 0.368 & 0.597 & 5.660 \\
            4B & 0.790 & 0.358 & 0.604 & 5.635 \\
            8B & \textbf{0.818} & 0.398 & 0.610 & 5.643 \\
            \midrule
            \rowcolor{gray!15} \multicolumn{5}{l}{\textit{Pre-trained model: Qwen2.5-14B}} \\
            0.6B & 0.724 & \textbf{0.368} & \textbf{0.561} & 3.676 \\
            1.7B & \textbf{0.732} & 0.358 & 0.518 & \textbf{4.236} \\
            4B & 0.722 & \textbf{0.368} & 0.537 & 3.727 \\
            8B & 0.724 & 0.353 & 0.549 & 4.206 \\
            \bottomrule
        \end{tabular}%
        }
    \end{minipage}%
    \hfill
    \begin{minipage}[t]{0.49\textwidth}
        \centering
        \vspace{0pt}
        \captionof{table}{Effect of reward-model scale on EBD inference latency (seconds). Lowest latency within each pre-trained model is bolded.}
        \label{tab:rm_scaling_eff}
        % \scriptsize
        \setlength{\tabcolsep}{5pt}
        \resizebox{\linewidth}{!}{%
        \begin{tabular}{l c c c c}
            \toprule
            RM Size & Math500 & GPQA & HumanEval & Writing \\
            \midrule
            \rowcolor{gray!15} \multicolumn{5}{l}{\textit{Pre-trained model: Qwen2.5-7B}} \\
            0.6B & 66.20 & 55.52 & 29.60 & \textbf{92.11} \\
            1.7B & 64.79 & \textbf{53.88} & 35.00 & 114.87 \\
            4B & 65.52 & 57.43 & 36.89 & 107.94 \\
            8B & \textbf{50.66} & 57.93 & \textbf{24.30} & 120.24 \\
            \midrule
            \rowcolor{gray!15} \multicolumn{5}{l}{\textit{Pre-trained model: Qwen3-8B-Base}} \\
            0.6B & 106.82 & \textbf{79.19} & 36.43 & 193.85 \\
            1.7B & 107.19 & 86.79 & 34.88 & 235.06 \\
            4B & \textbf{104.49} & 85.64 & \textbf{28.09} & 220.63 \\
            8B & 108.10 & 86.95 & 36.51 & \textbf{183.92} \\
            \midrule
            \rowcolor{gray!15} \multicolumn{5}{l}{\textit{Pre-trained model: Qwen2.5-14B}} \\
            0.6B & 95.86 & \textbf{71.83} & 81.94 & 177.43 \\
            1.7B & 97.32 & 75.35 & 71.53 & 143.15 \\
            4B & 97.90 & 72.88 & 60.67 & \textbf{139.13} \\
            8B & \textbf{94.64} & 74.63 & \textbf{60.18} & 143.24 \\
            \bottomrule
        \end{tabular}%
        }
    \end{minipage}
    \vspace{-10pt}
\end{table}

\noindent \textbf{Is reward-guided decoding robust to reward model scale?} 
Tables~\ref{tab:rm_scaling_perf} and \ref{tab:rm_scaling_eff} confirm performance stability across diverse tested configurations. Specifically, a 0.6B reward model nearly matches the 8B variant in Qwen2.5-7B Math500, achieving 0.720 versus 0.722 while reducing the video memory from 29.6 GB to 16.4 GB. These findings indicated that EBD did not depend on a specific large scale reward model and operated effectively when utilizing lightweight reward models.

\noindent \textbf{Does EBD improve performance through the same mechanism as post-training?}
We have established that EBD effectively activates instruction-following and reasoning capabilities in frozen pre-trained models. To probe whether this activation operates through the same mechanism as parameter updates, we evaluate EBD on models cold-start supervised fine-tuned on Dolly and Hybrid, as reported in Table~\ref{tab:discussion_coldstart}. On Math500, GPQA, and AlpacaEval, EBD performance scales positively with SFT. On HumanEval, however, EBD attains its highest score of $0.476$ at the no-SFT stage, and performance declines as SFT progresses (Dolly $0.459$, Hybrid $0.455$). Cold-start SFT for instruction following reshapes the pre-trained probability distribution, suppressing some continuation modes to establish new instruction-following behavior. For tasks like math, science reasoning, and open-ended generation, this redistribution is beneficial; for code completion, where the pre-trained distribution already encodes strong task-relevant behavior, the redistribution is detrimental. In contrast, EBD applies a reward tilt to the existing prior, not altering the underlying model distribution and without suppressing any modes. This explains the robustness of EBD. It operates on whatever distribution the model provides, tilting it toward higher utility without the mode-suppression side effect of SFT.
\begin{figure}[tb!]
  \centering
  \vspace{-2pt}
  \includegraphics[width=0.99\linewidth]{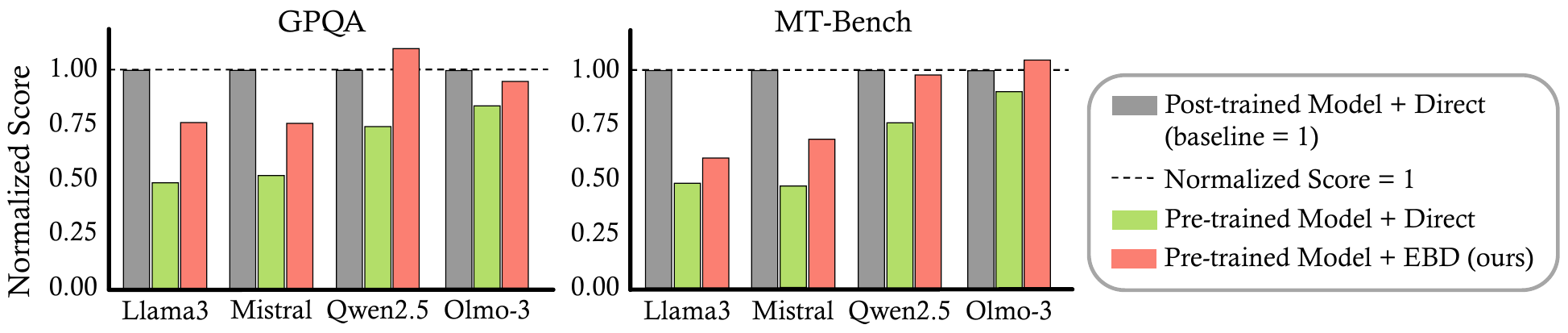}
  \vspace{-5pt}
  \caption{Improving pre-trained models without parameter updates. Direct decoding falls far below the 1.0 post-trained baseline, while EBD closes a substantial portion of this gap at inference time and in some cases matches or exceeds post-trained performance.}
  \label{fig:teaser}
  \vspace{-8pt}
\end{figure}

\begin{table}[tb!]
    \centering
    \vspace{-3pt}
    % 左边表格容器（加宽一点，让长内容更舒服）
    \begin{minipage}[t]{0.58\textwidth}
        \centering
        \captionof{table}{EBD performance across post-training stages on four benchmarks. EBD remains robust across training stages without altering the underlying model distribution.}
        \label{tab:discussion_coldstart}
        \footnotesize
        \setlength{\tabcolsep}{3pt}
        \begin{tabularx}{\linewidth}{@{} l *{4}{>{\centering\arraybackslash}X} @{}}
            \toprule
            Setting / method & Math500 & GPQA & HumanEval & Alpaca \\
            \midrule
            Direct (no SFT)          & 0.016 & 0.152 & 0.299 & 1.525 \\
            \textbf{EBD avg. (no SFT)}      & 0.030 & 0.173 & \textbf{0.476} & 1.625 \\
            \textbf{EBD avg. (Dolly)}    & 0.134 & 0.272 & 0.459 & 2.040 \\
            \textbf{EBD avg. (Hybrid)}  & \textbf{0.143} & \textbf{0.316} & 0.455 & \textbf{3.340} \\
            \bottomrule
        \end{tabularx}
    \end{minipage}%
    \hfill
    \begin{minipage}[t]{0.38\textwidth}
        \centering
        \captionof{table}{Hyperparameter search on validation set. Optimal values, accuracy, ranges for key hyperparameters.}
        \vspace{2pt}
        \label{tab:hyperparam_search}
        \footnotesize
        \setlength{\tabcolsep}{4pt}
        \begin{tabular}{@{} l c c c @{}}
            \toprule
            Study & Best value & Best acc. & Range \\
            \midrule
            beta          & \textbf{3.5} & 0.792 & 3--5 \\
            mcmc\_steps   & \textbf{12}  & 0.792 & 6--14 \\
            N\_candidates & \textbf{25}  & 0.814 & 1--25 \\
            block\_nums   & \textbf{24}  & 0.796 & 3--32 \\
            \bottomrule
        \end{tabular}
    \end{minipage}
    \vspace{-5pt}
\end{table}

\noindent \textbf{How does inference-time steering relate to post-trained models?}
Inference-time steering closes a substantial portion of the performance gap between pre-trained and post-trained models without parameter updates. As shown in Table~\ref{tab:main_results_full} and Figure~\ref{fig:teaser}, EBD applied to pre-trained models substantially narrows the normalized gap to post-trained direct-decoding baselines and occasionally matches or exceeds them. Figure~\ref{fig:teaser} visualizes this pattern across model families: Direct decoding consistently falls below the 1.0 post-trained reference, whereas EBD raises pre-trained scores to approach or even surpass that threshold. While post-training modifies parameters to improve task accessibility, EBD instead uses reward guidance to reach similar behavioral regions from a frozen state. The two approaches are complementary, allowing EBD to serve as an effective inference-time boost or a diagnostic for pre-trained checkpoint sensitivity.

\noindent \textbf{Is EBD stable across hyperparameter configurations?} 
EBD remains stable across its key hyperparameter ranges, as shown in Table~\ref{tab:hyperparam_search}. Inverse temperature $\beta$ is optimal at 3.5 with an accuracy of 0.792, but values between 3.0 and 4.5 show only minor variation. This indicates a forgiving calibration process without sharp performance drops. Accuracy plateaus at 12 MCMC steps and saturates as the block count reaches 12. These  results confirm that EBD does not require exhaustive per-task tuning to achieve strong effectiveness across a wide range of diverse benchmarks.

\section{Conclusion}
\label{sec:conclusion}

We propose Energy-Based Decoding (EBD), which is a training-free inference procedure that improves frozen pre-trained model outputs through energy-guided blockwise MCMC. Across five models and six benchmarks, EBD consistently outperforms direct decoding on both objective and subjective tasks while offering a substantially better quality and efficiency trade-off than Power Sampling. This procedure often reduces the inference latency by more than an order of magnitude. These gains are robust across reward model scales and families, persist in zero-shot and cold-start settings, and shift pre-trained model behavioral patterns toward post-trained signatures. This supports the view that decoding is a critical part of the evaluation problem itself. Standard decoding can underestimate frozen pre-trained models under task-oriented evaluation, and reward-guided inference improves alignment-relevant behavior without a single parameter update.

% \section*{References}
\bibliographystyle{plainnat}
\bibliography{references}

%%%%%%%%%%%%%%%%%%%%%%%%%%%%%%%%%%%%%%%%%%%%%%%%%%%%%%%%%%%%

%%%%%%%%%%%%%%%%%%%%%%%%%%%%%%%%%%%%%%%%%%%%%%%%%%%%%%%%%%%%
\clearpage
\appendix
\section{Case Studies}
\label{app:case_studies}
% \vspace{-50pt}
% We present 30 representative case studies that 
We present 30 representative case studies that illustrate how EBD activates task-oriented behavior from frozen pre-trained models. These examples span Math500 and GPQA across two base models, Llama3-8B and Qwen3-8B-Base, covering diverse domains including algebra, geometry, number theory, counting, precalculus, and graduate-level science reasoning. Each case contrasts Direct decoding with EBD and reveals a consistent pattern. Direct decoding frequently produces incoherent, socially-styled, or repetitive outputs that fail to deliver a parseable final answer, whereas EBD systematically generates structured, step-by-step solutions with well-formatted boxed conclusions, demonstrating that reward-guided refinement under the Metropolis--Hastings criterion can effectively steer frozen base models toward task-aligned generation without any parameter updates. 
\vspace{-100pt}
\begin{figure}[h]
    \centering
    % 第1张子图
    \begin{subfigure}[b]{1.0\textwidth}
        \centering
        \includegraphics[width=0.95\textwidth]{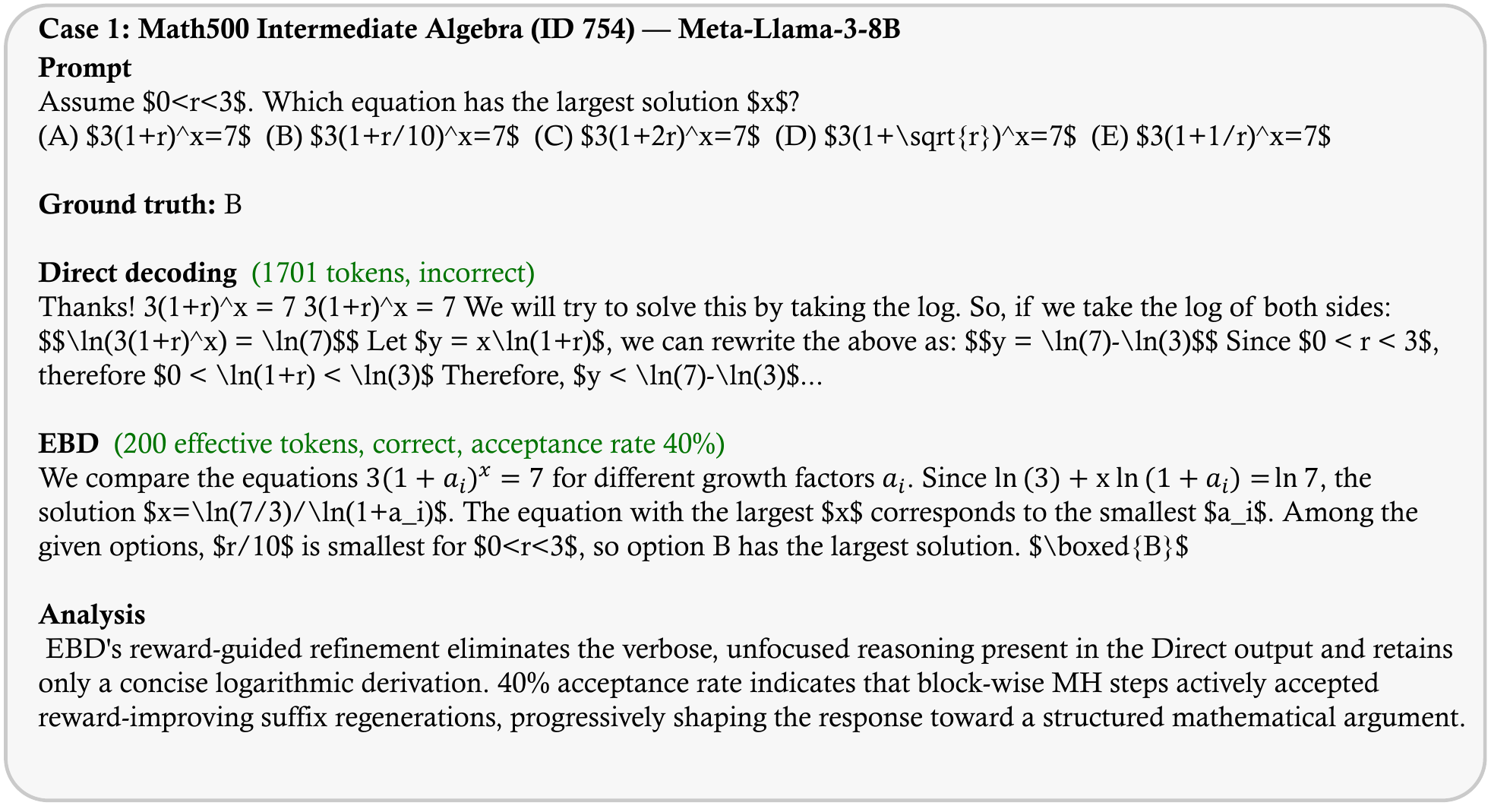}
        \label{fig:sub1}
    \end{subfigure}
    
    \vspace{20pt}
    
    % 第2张子图
    \begin{subfigure}[b]{1.0\textwidth}
        \centering
        \includegraphics[width=0.95\textwidth]{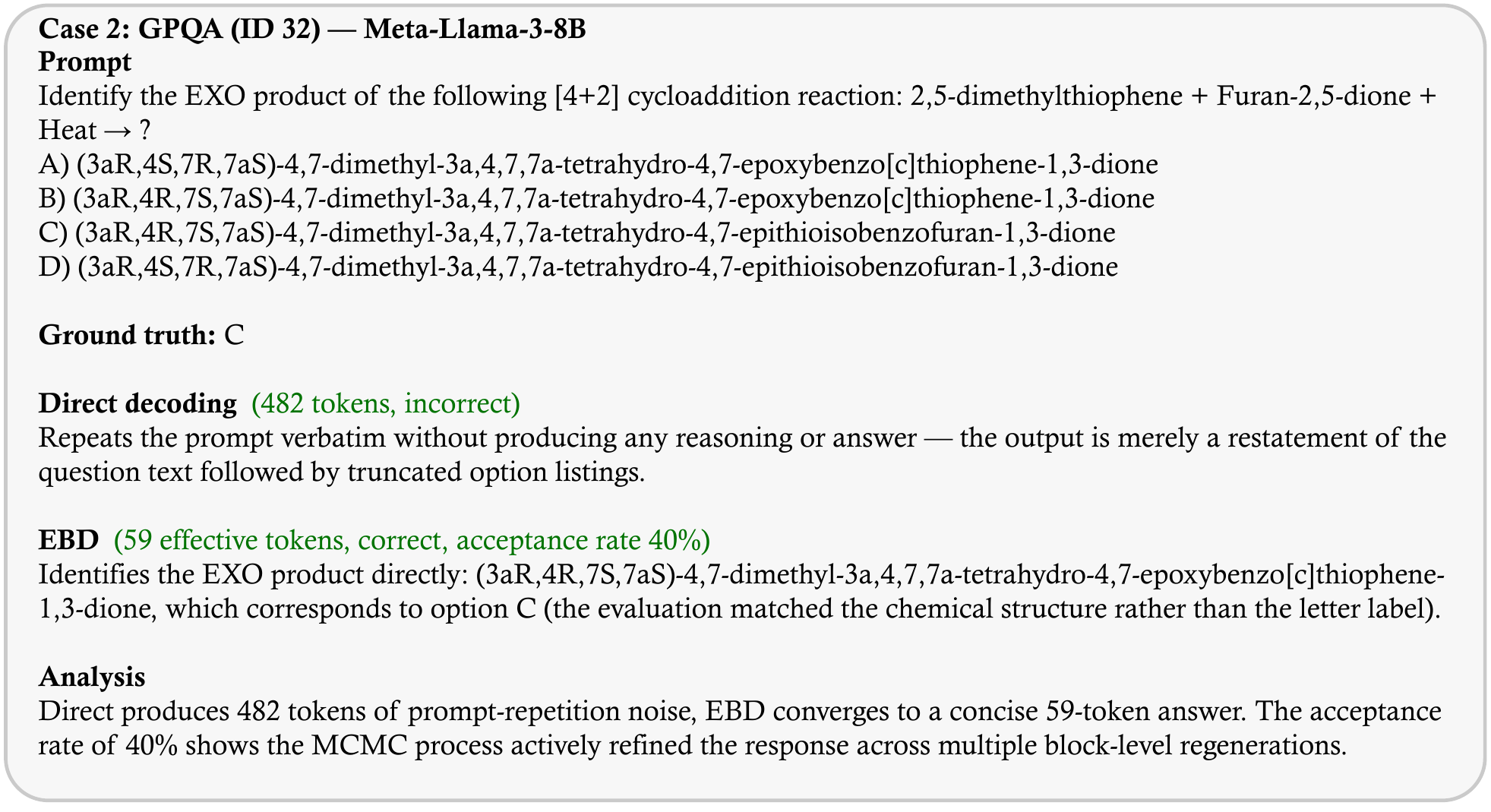}
    \end{subfigure}
    
    % \vspace{-5pt}
\end{figure}

\begin{figure}[H]
    \centering
    % 第3张子图
    \begin{subfigure}[b]{1.0\textwidth}
        \centering
        \includegraphics[width=0.95\textwidth]{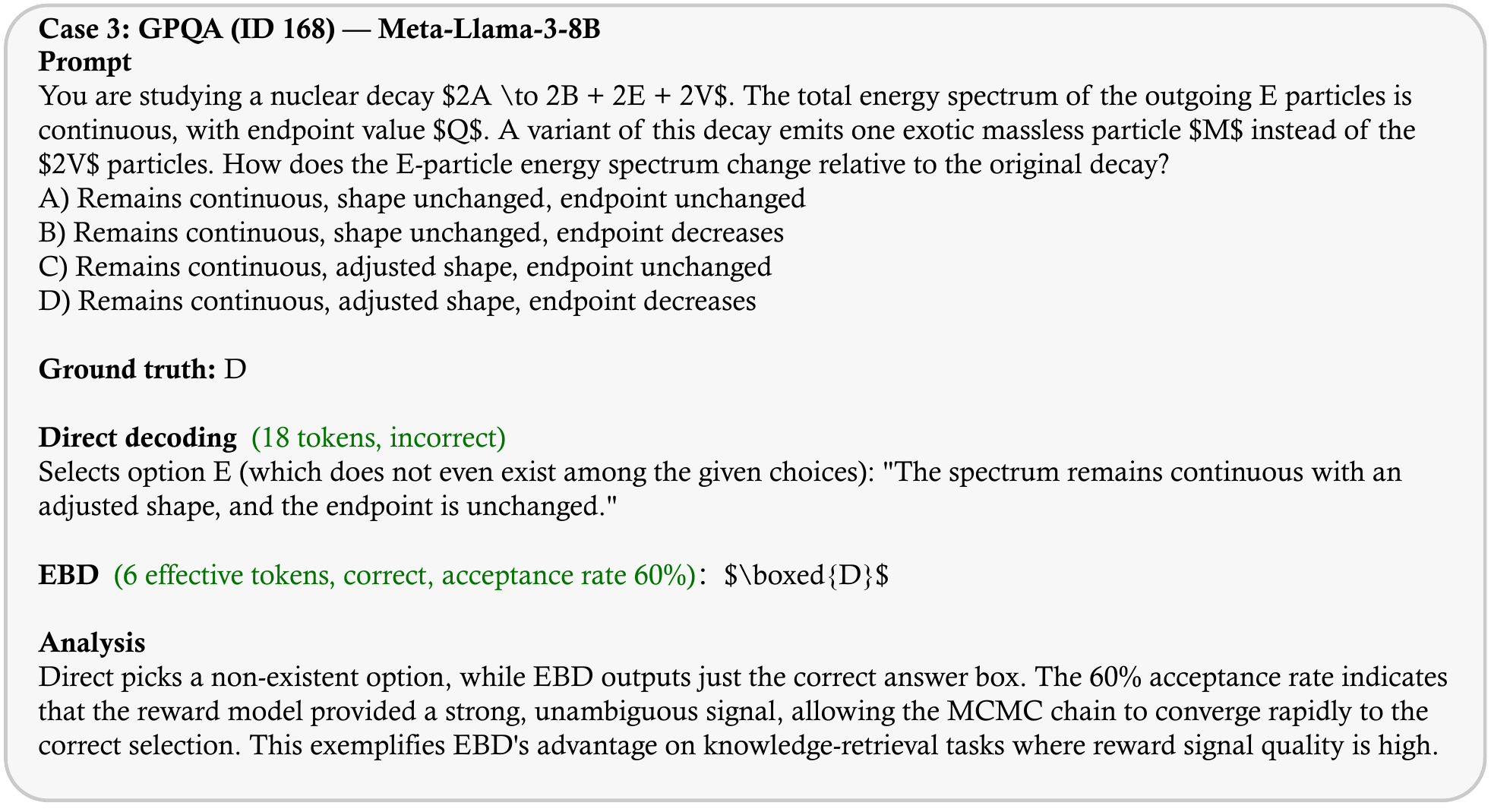}
        \label{fig:sub3}
    \end{subfigure}
    
    % 第1张子图
    \begin{subfigure}[b]{1.0\textwidth}
        \centering
        \includegraphics[width=0.95\textwidth]{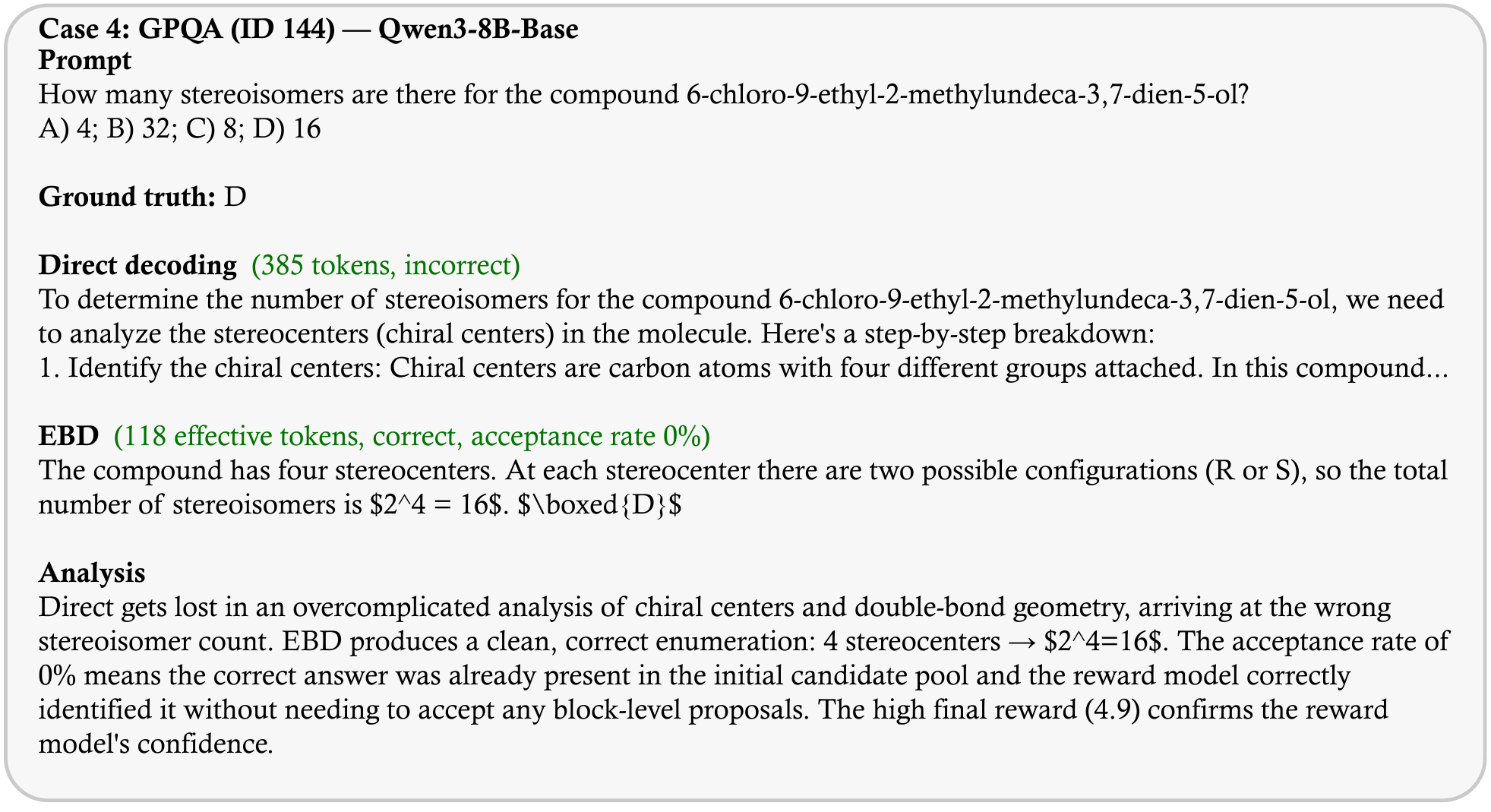}
        \label{fig:sub1}
    \end{subfigure}
    
    % \vspace{-12pt}
    
    % 第2张子图
    \begin{subfigure}[b]{1.0\textwidth}
        \centering
        \includegraphics[width=0.95\textwidth]{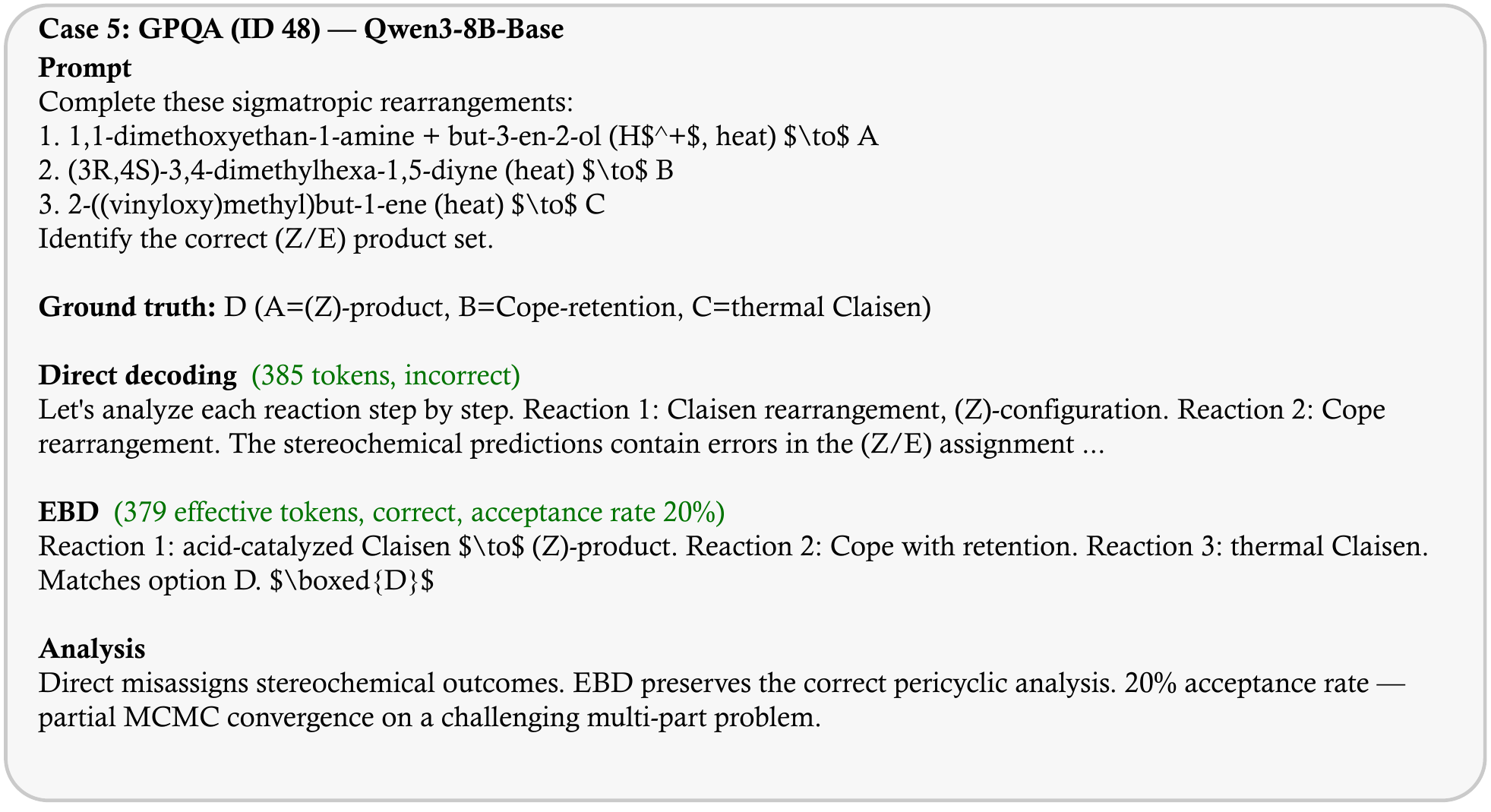}
    \end{subfigure}
    
    % \vspace{-5pt}

\end{figure}

\begin{figure}[H]
    \centering

        % 第3张子图
    \begin{subfigure}[b]{1.0\textwidth}
        \centering
        \includegraphics[width=0.95\textwidth]{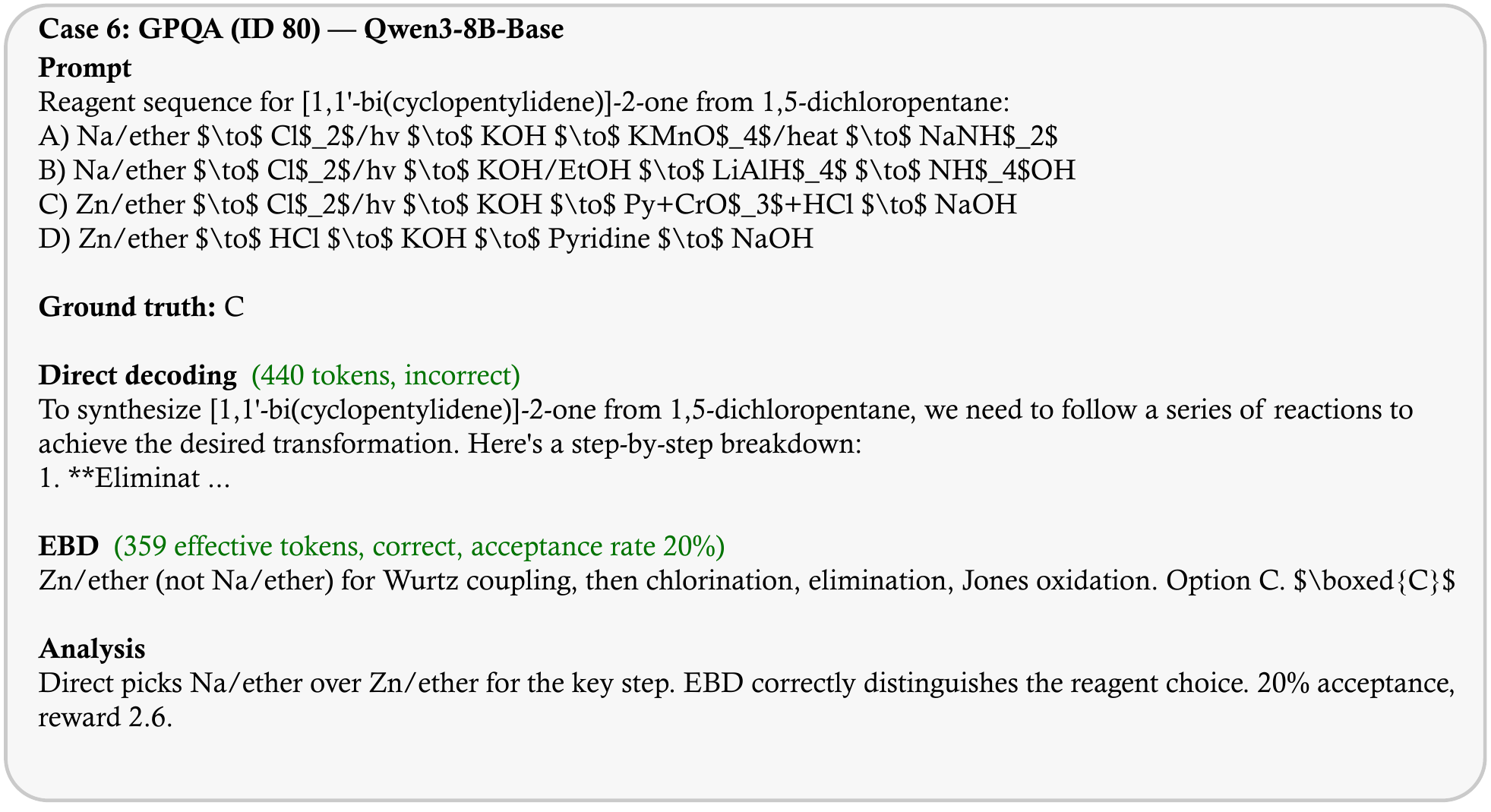}
        \label{fig:sub3}
    \end{subfigure}
    
    % 第1张子图
    \begin{subfigure}[b]{1.0\textwidth}
        \centering
        \includegraphics[width=0.95\textwidth]{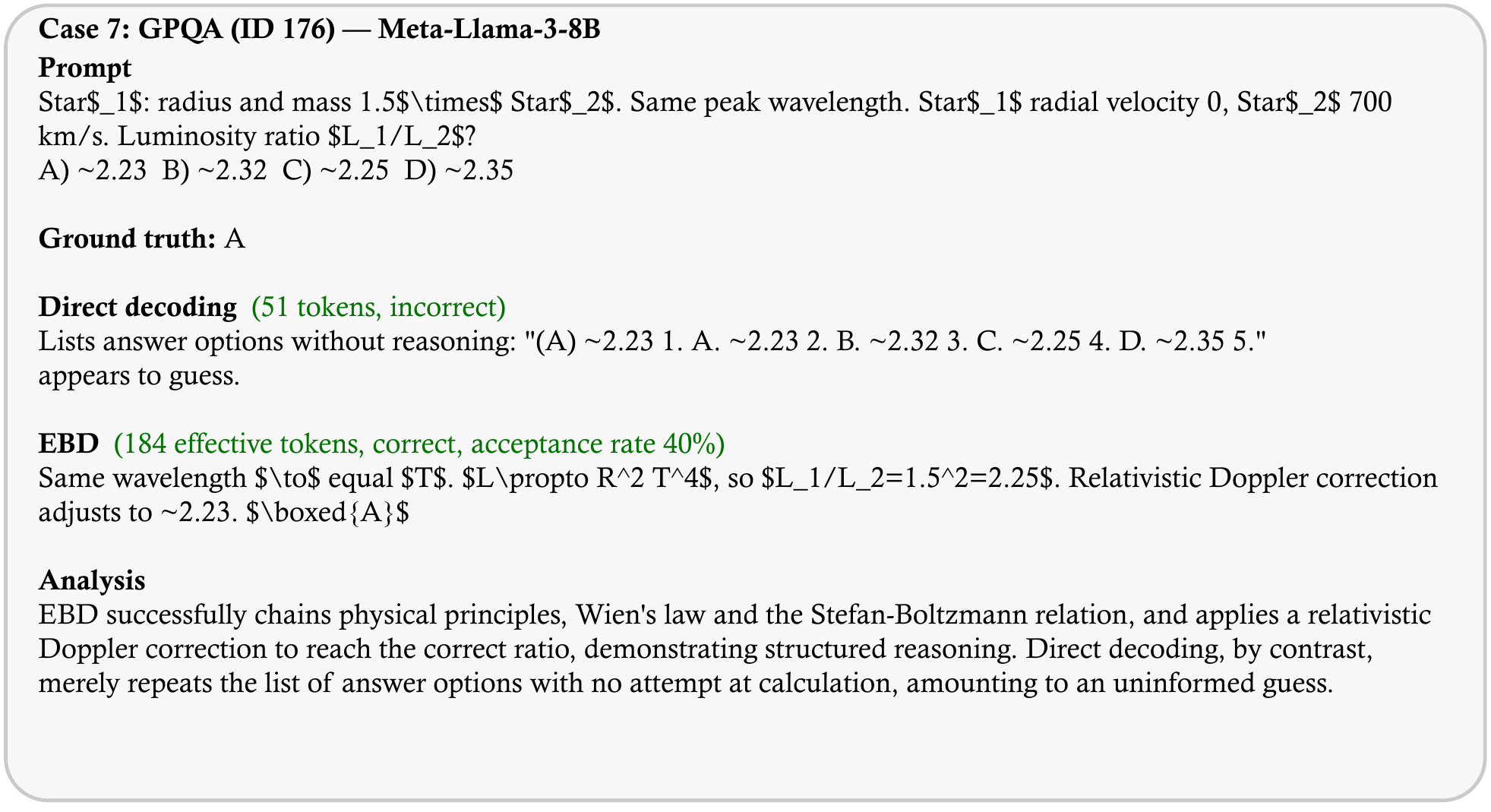}
        \label{fig:sub1}
    \end{subfigure}
    
    % \vspace{-12pt}
    
    % 第2张子图
    \begin{subfigure}[b]{1.0\textwidth}
        \centering
        \includegraphics[width=0.95\textwidth]{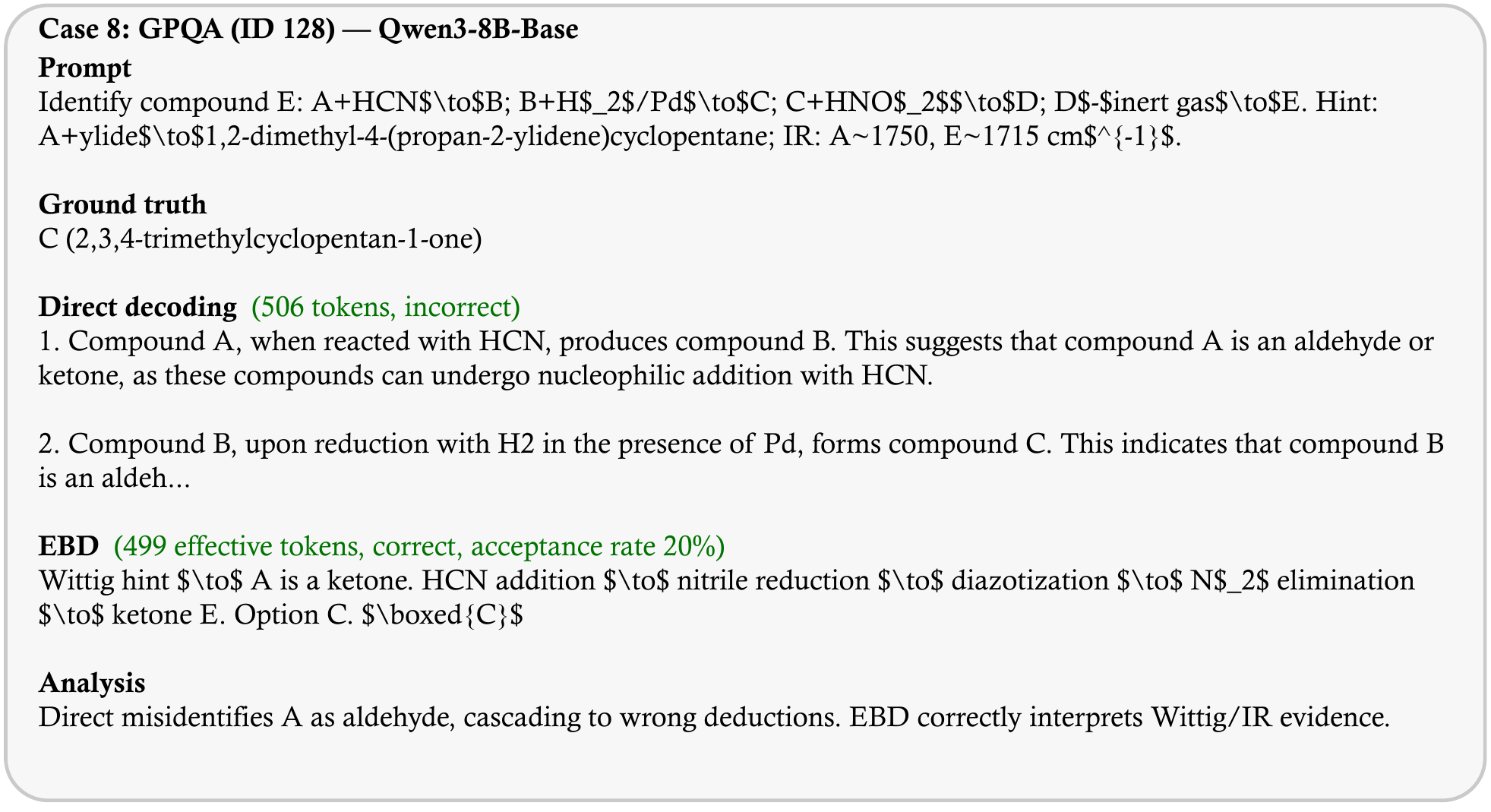}
    \end{subfigure}
    
    % \vspace{-5pt}
\end{figure}

\begin{figure}[H]
    \centering

        % 第3张子图
    \begin{subfigure}[b]{1.0\textwidth}
        \centering
        \includegraphics[width=0.95\textwidth]{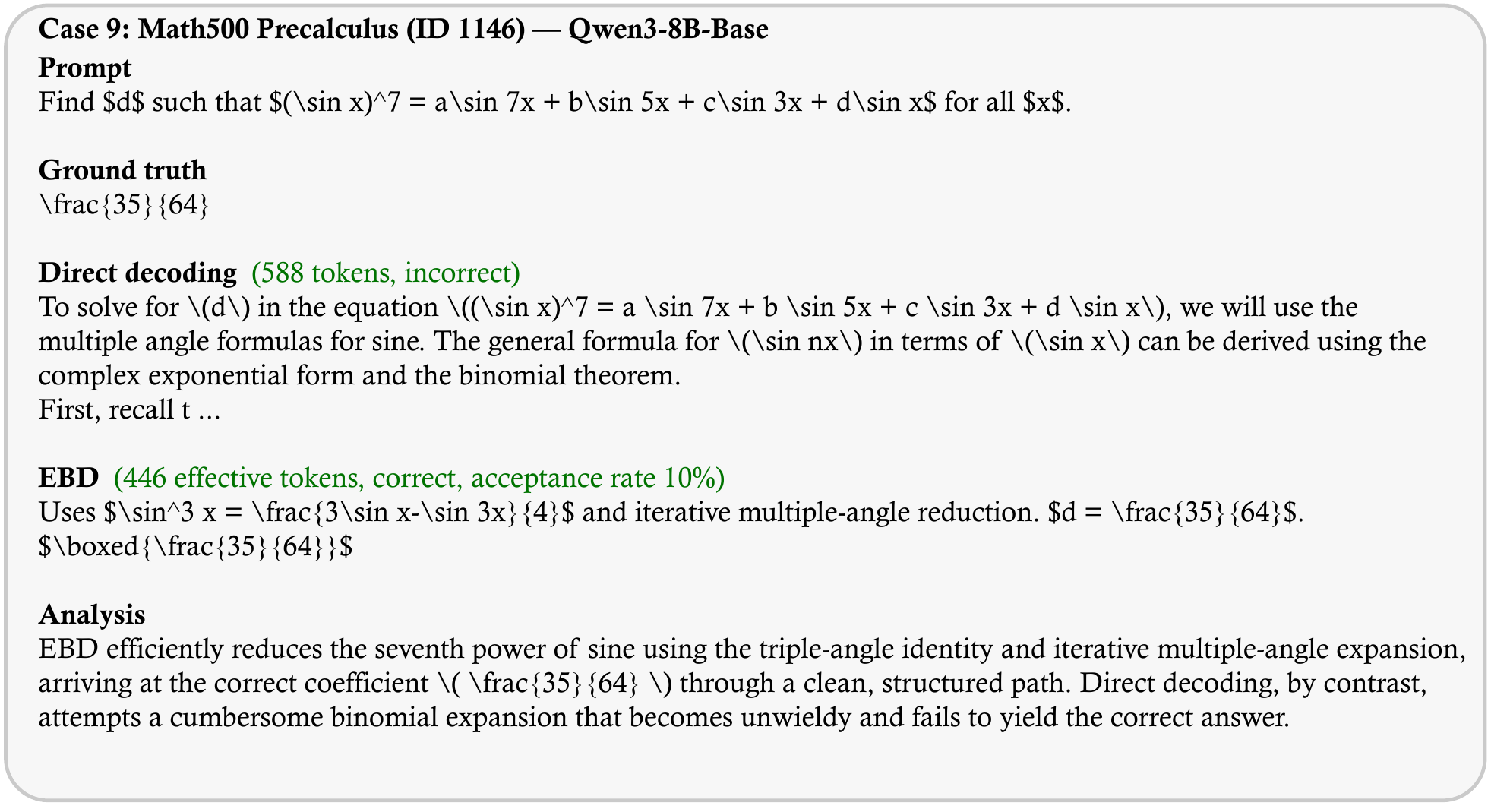}
        \label{fig:sub3}
    \end{subfigure}
    
    % 第1张子图
    \begin{subfigure}[b]{1.0\textwidth}
        \centering
        \includegraphics[width=0.95\textwidth]{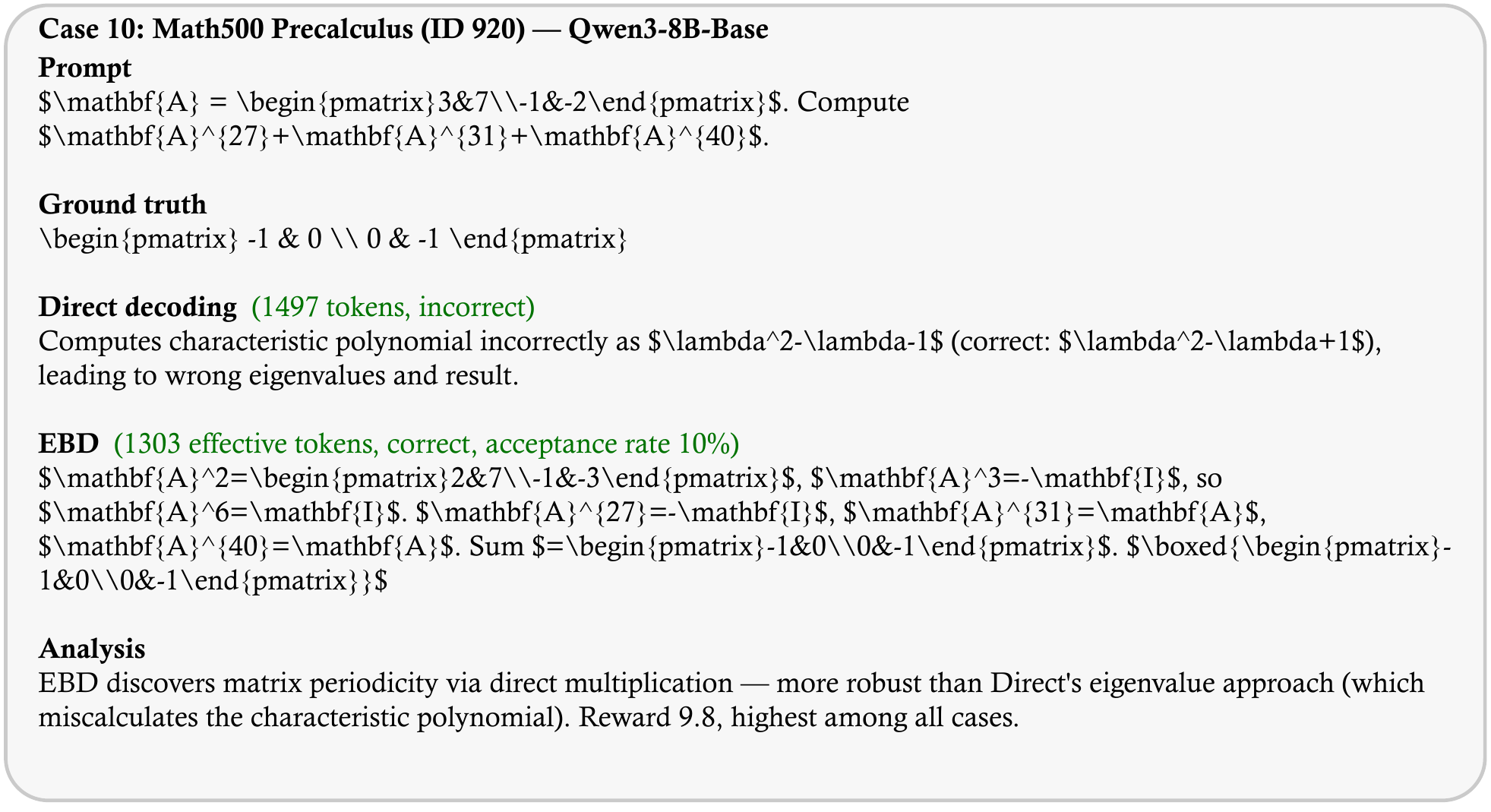}
        \label{fig:sub1}
    \end{subfigure}
    
    % \vspace{-12pt}
    
    % 第2张子图
    \begin{subfigure}[b]{1.0\textwidth}
        \centering
        \includegraphics[width=0.95\textwidth]{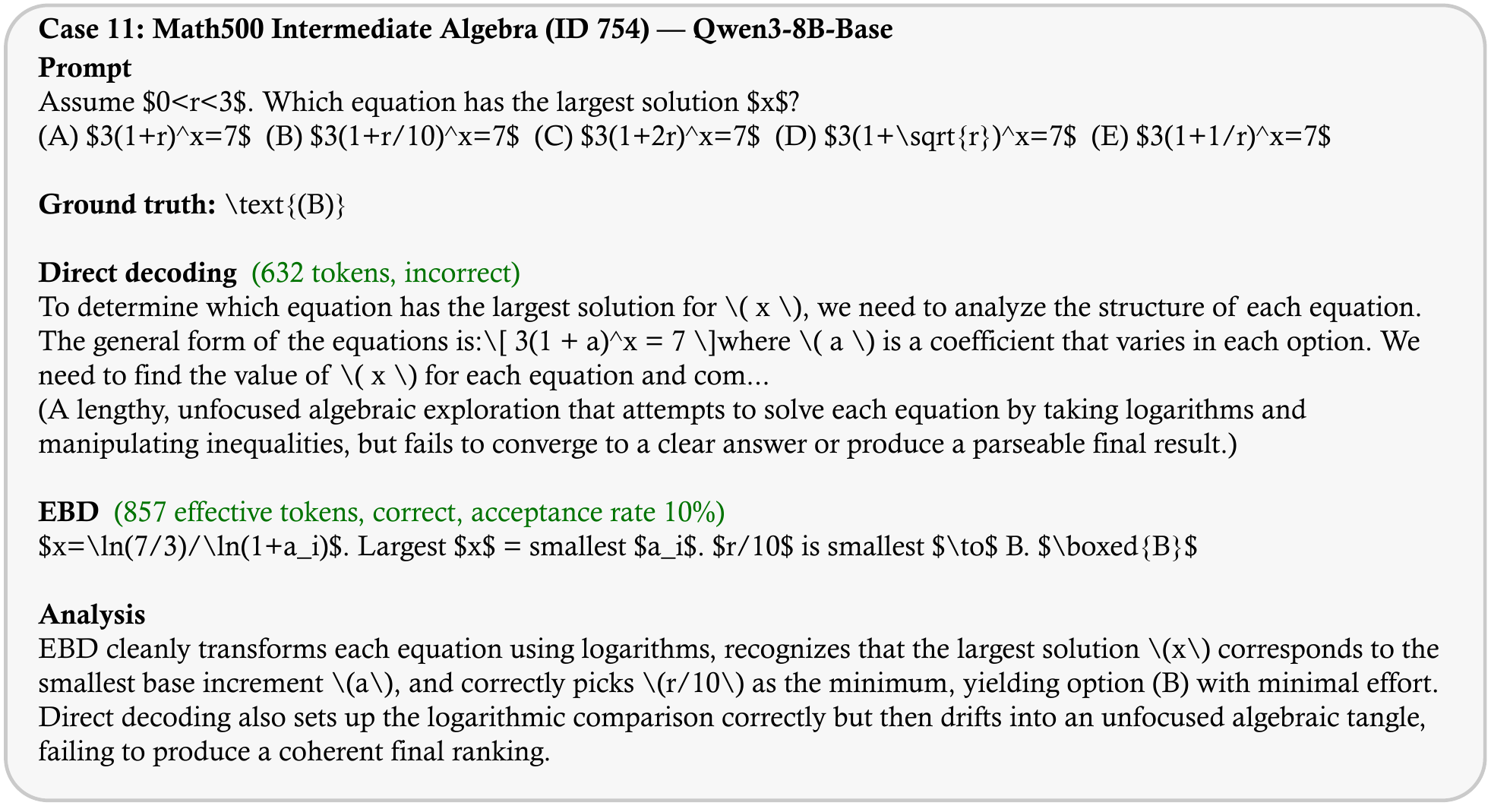}
    \end{subfigure}
    
    % \vspace{-5pt}
    
\end{figure}

\begin{figure}[H]
    \centering
        % 第3张子图
    \begin{subfigure}[b]{1.0\textwidth}
        \centering
        \includegraphics[width=0.95\textwidth]{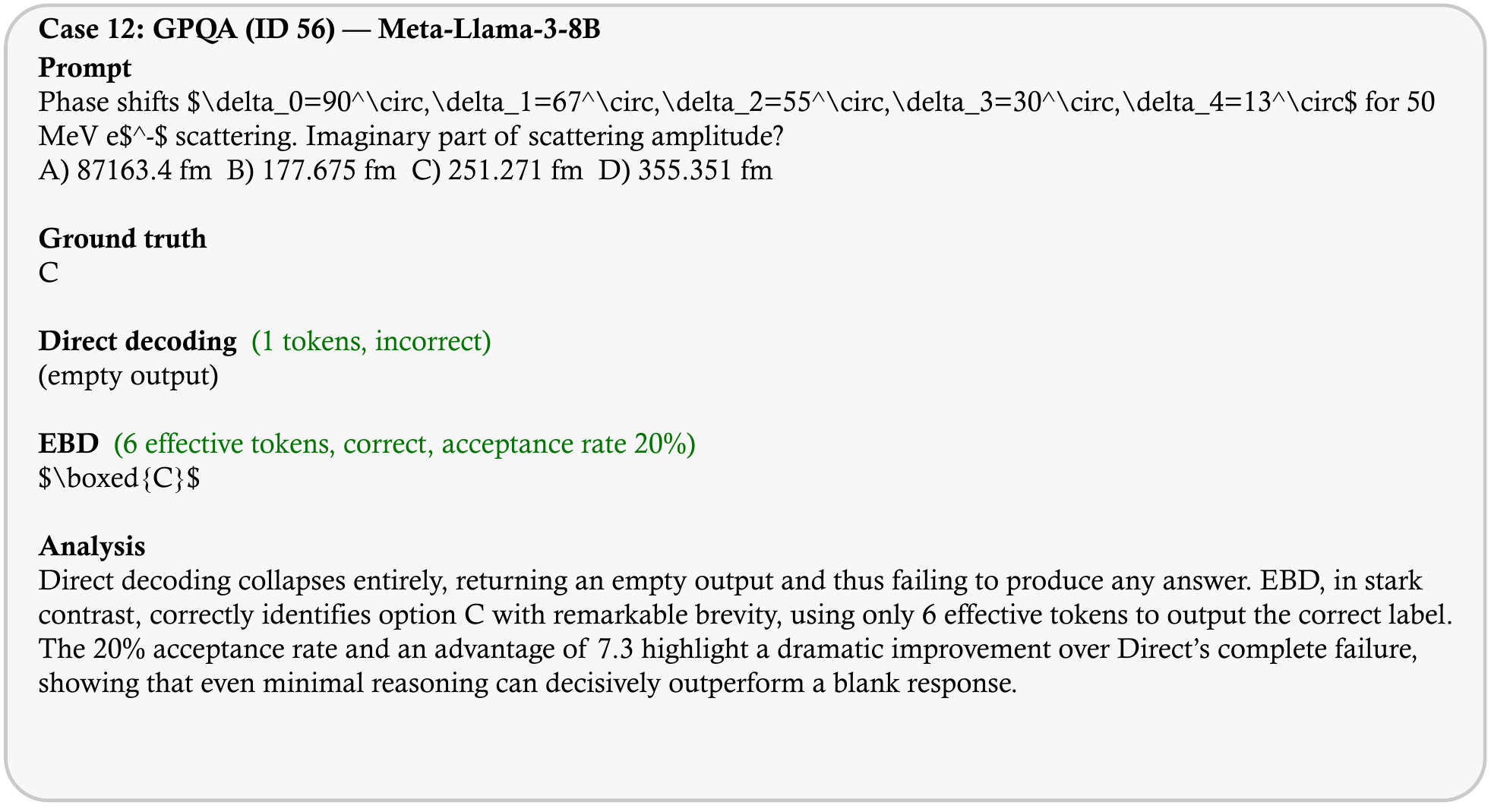}
        \label{fig:sub3}
    \end{subfigure}
    
    % 第1张子图
    \begin{subfigure}[b]{1.0\textwidth}
        \centering
        \includegraphics[width=0.95\textwidth]{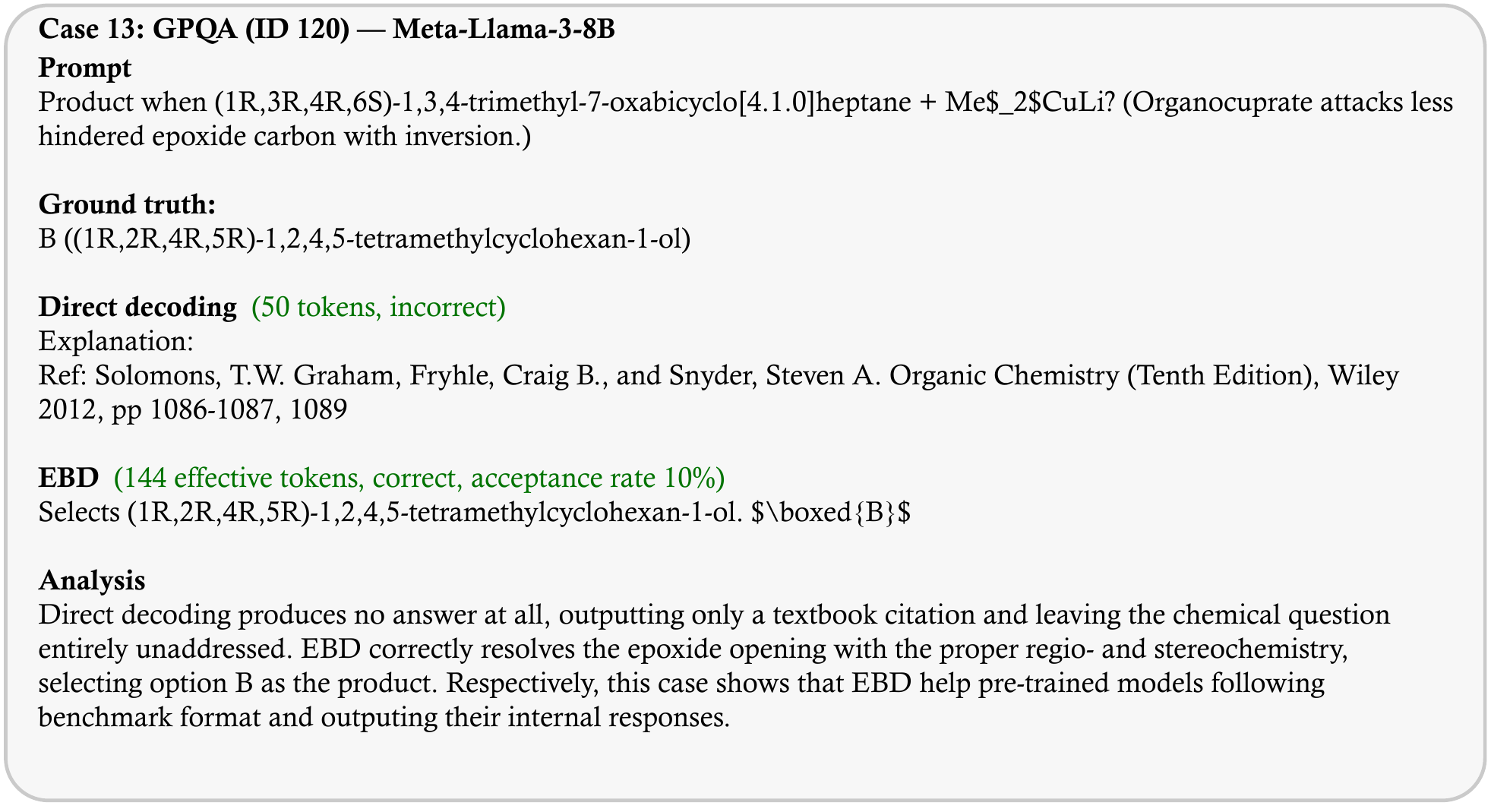}
        \label{fig:sub1}
    \end{subfigure}
    
    % \vspace{-12pt}
    
    % 第2张子图
    \begin{subfigure}[b]{1.0\textwidth}
        \centering
        \includegraphics[width=0.95\textwidth]{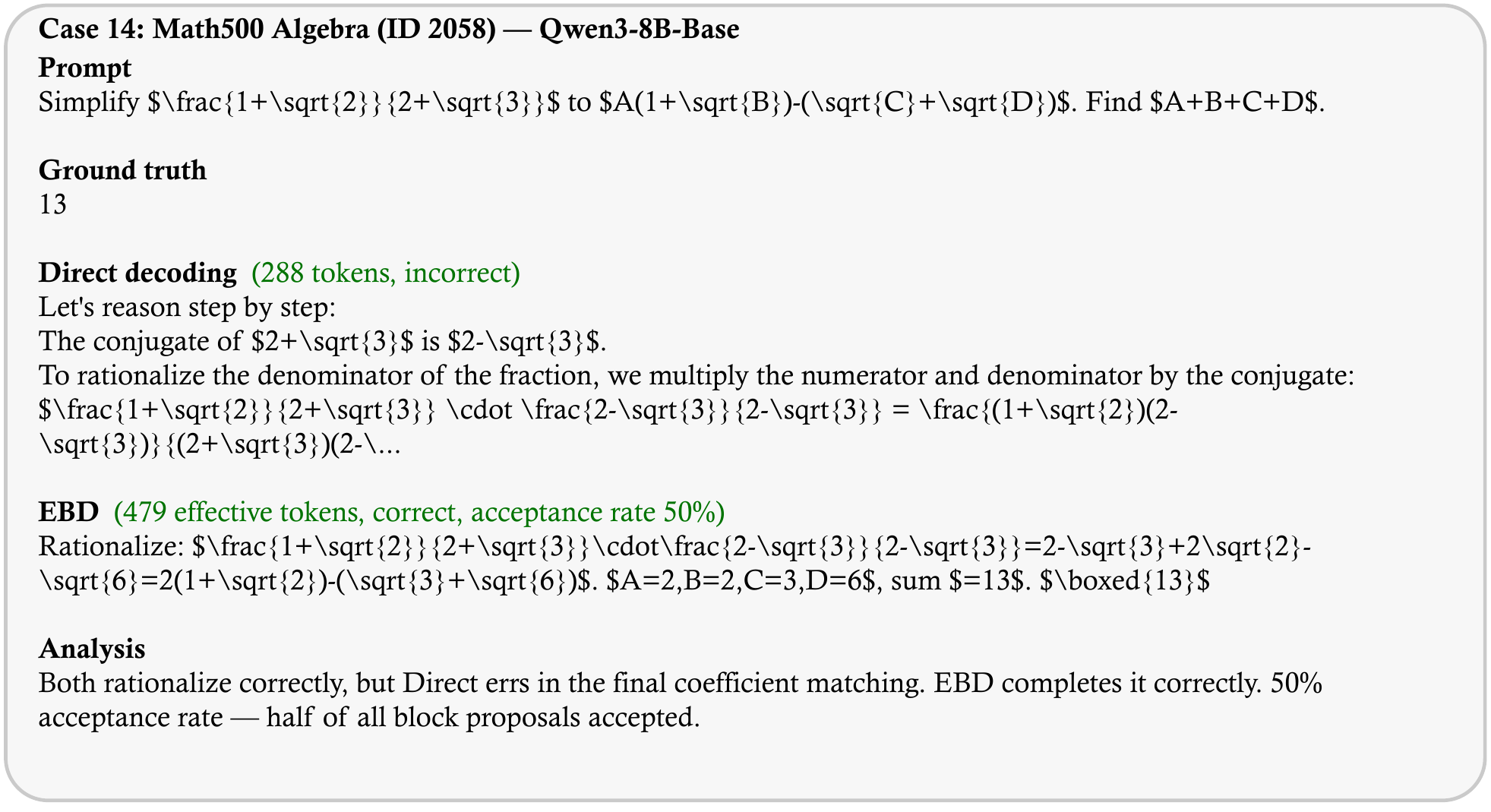}
    \end{subfigure}
    
    % \vspace{-5pt}

\end{figure}

\begin{figure}[H]
    \centering

        % 第3张子图
    \begin{subfigure}[b]{1.0\textwidth}
        \centering
        \includegraphics[width=0.95\textwidth]{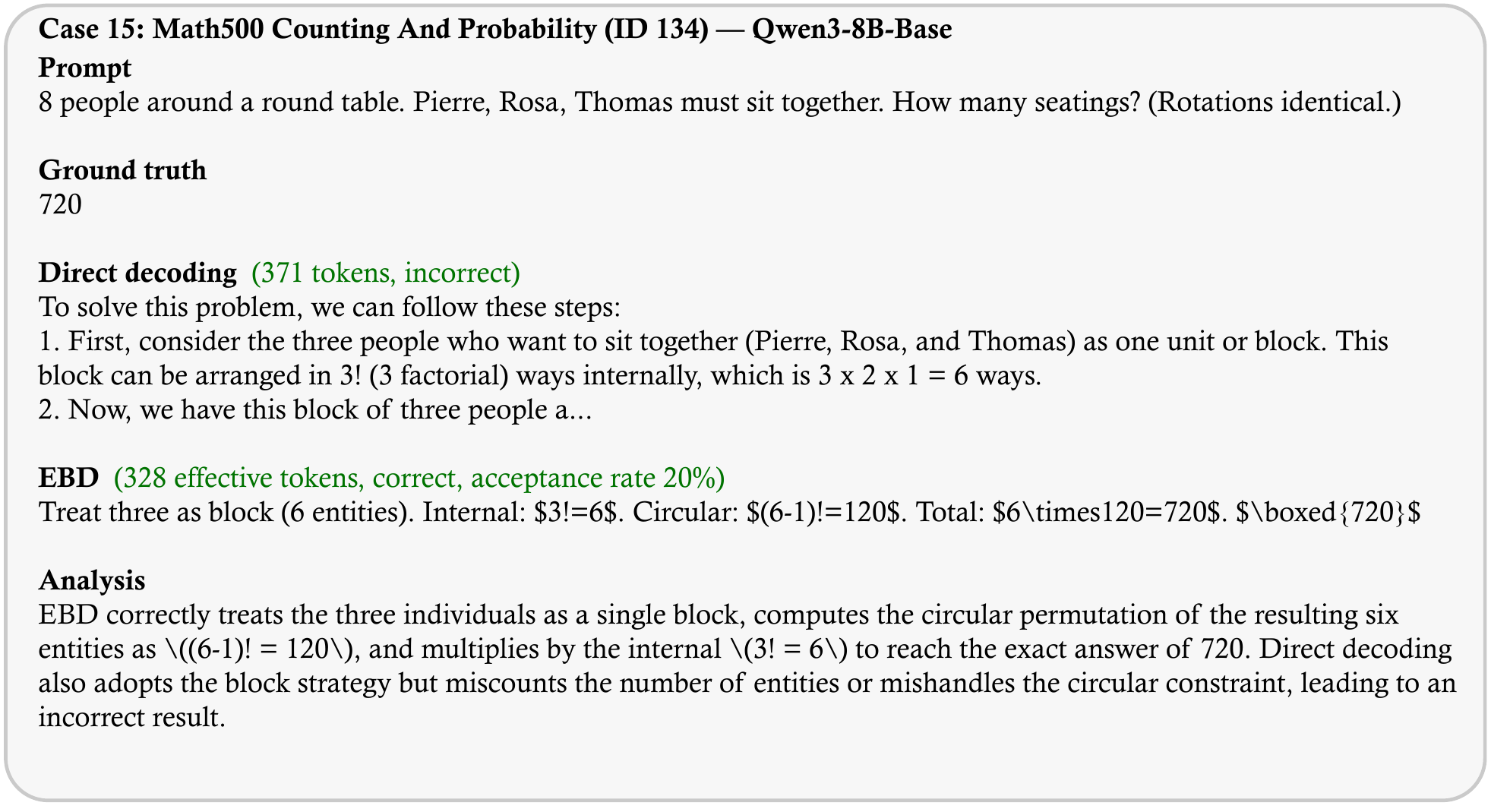}
        \label{fig:sub3}
    \end{subfigure}
    
    % 第1张子图
    \begin{subfigure}[b]{1.0\textwidth}
        \centering
        \includegraphics[width=0.95\textwidth]{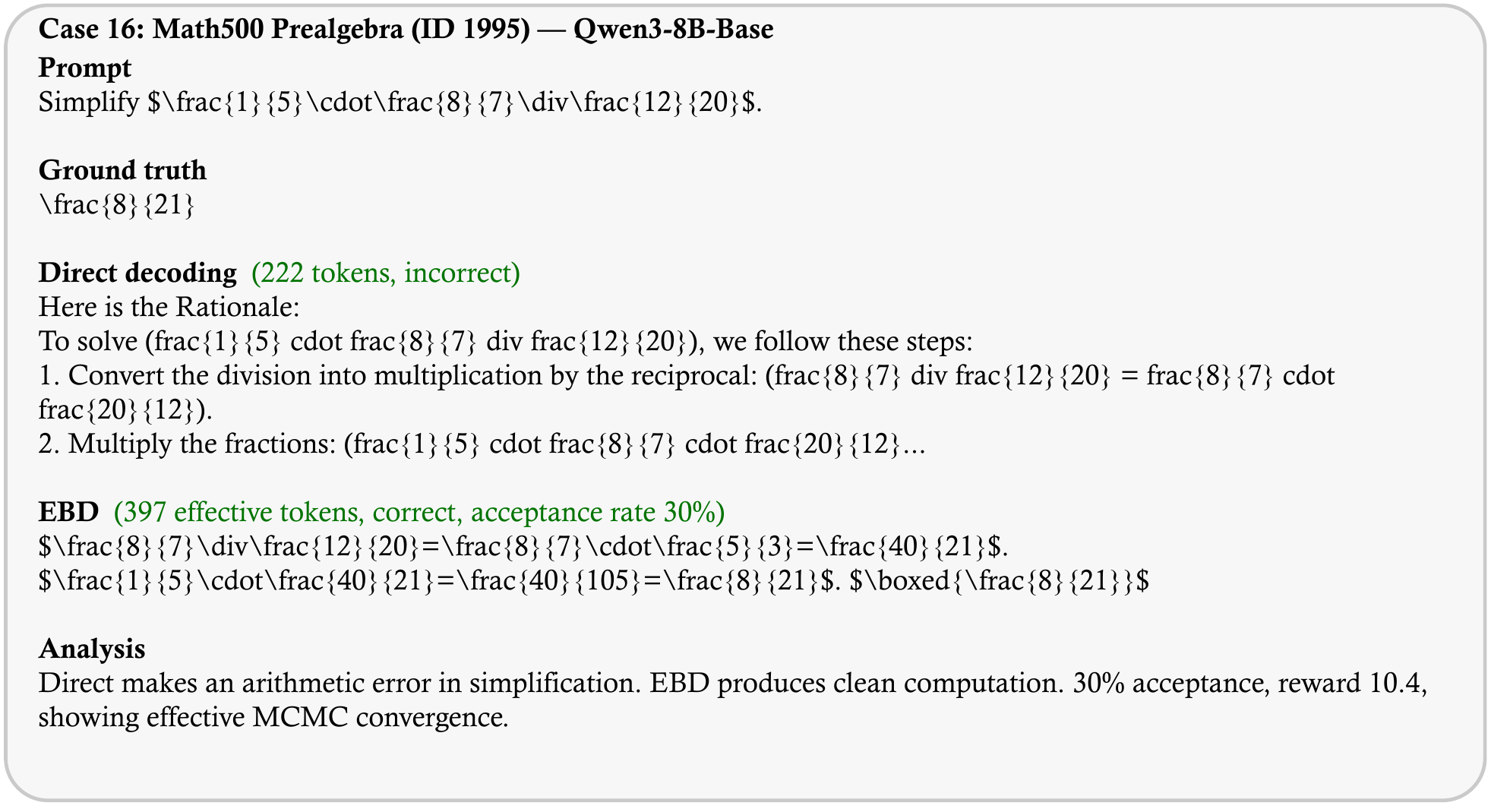}
        \label{fig:sub1}
    \end{subfigure}
    
    % \vspace{-12pt}
    
    % 第2张子图
    \begin{subfigure}[b]{1.0\textwidth}
        \centering
        \includegraphics[width=0.95\textwidth]{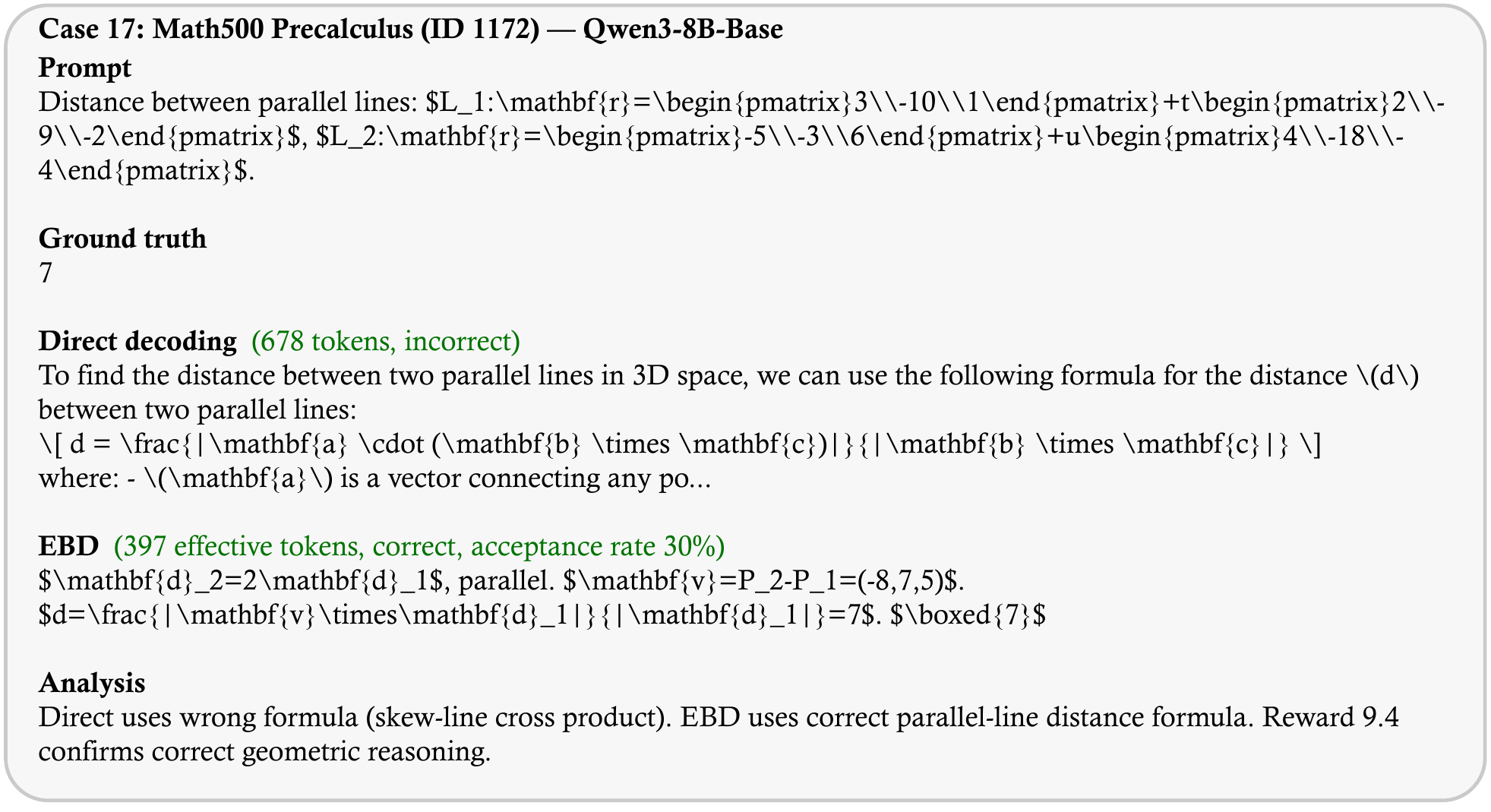}
    \end{subfigure}
    
    % \vspace{-5pt}
\end{figure}

\begin{figure}[H]
    \centering

        % 第3张子图
    \begin{subfigure}[b]{1.0\textwidth}
        \centering
        \includegraphics[width=0.95\textwidth]{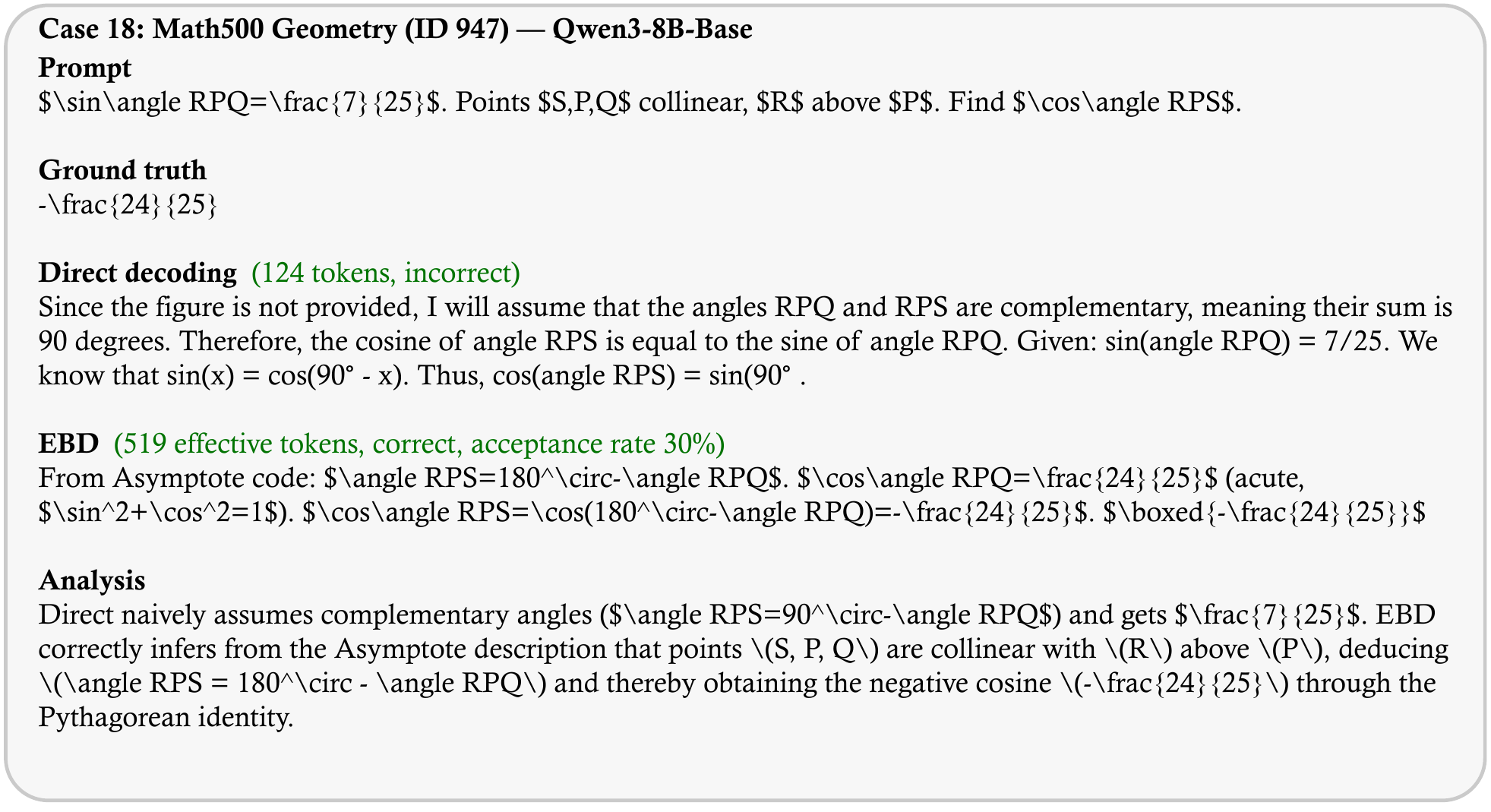}
        \label{fig:sub3}
    \end{subfigure}
    
    % 第1张子图
    \begin{subfigure}[b]{1.0\textwidth}
        \centering
        \includegraphics[width=0.95\textwidth]{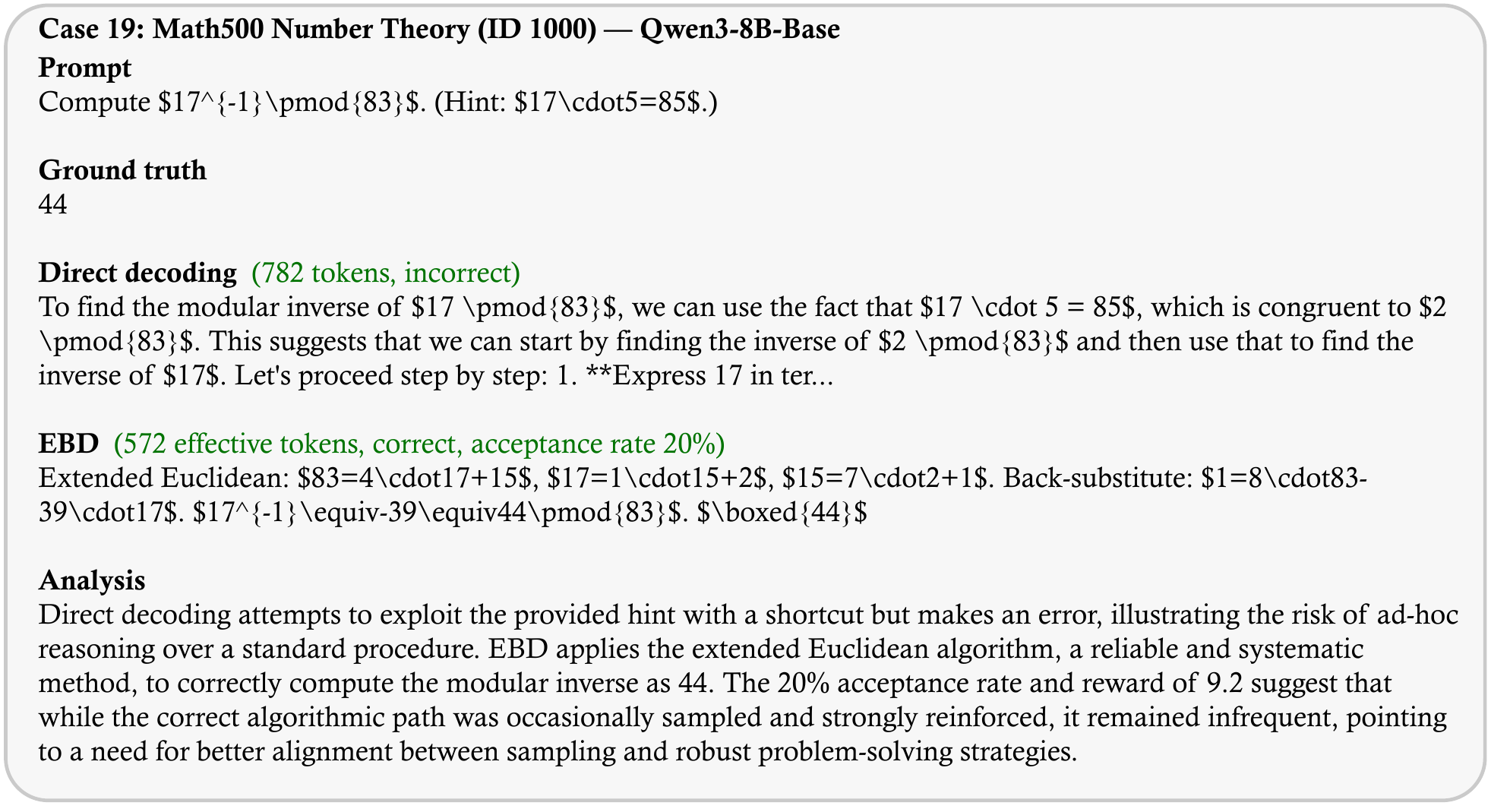}
        \label{fig:sub1}
    \end{subfigure}
    
    % \vspace{-12pt}
    
    % 第2张子图
    \begin{subfigure}[b]{1.0\textwidth}
        \centering
        \includegraphics[width=0.95\textwidth]{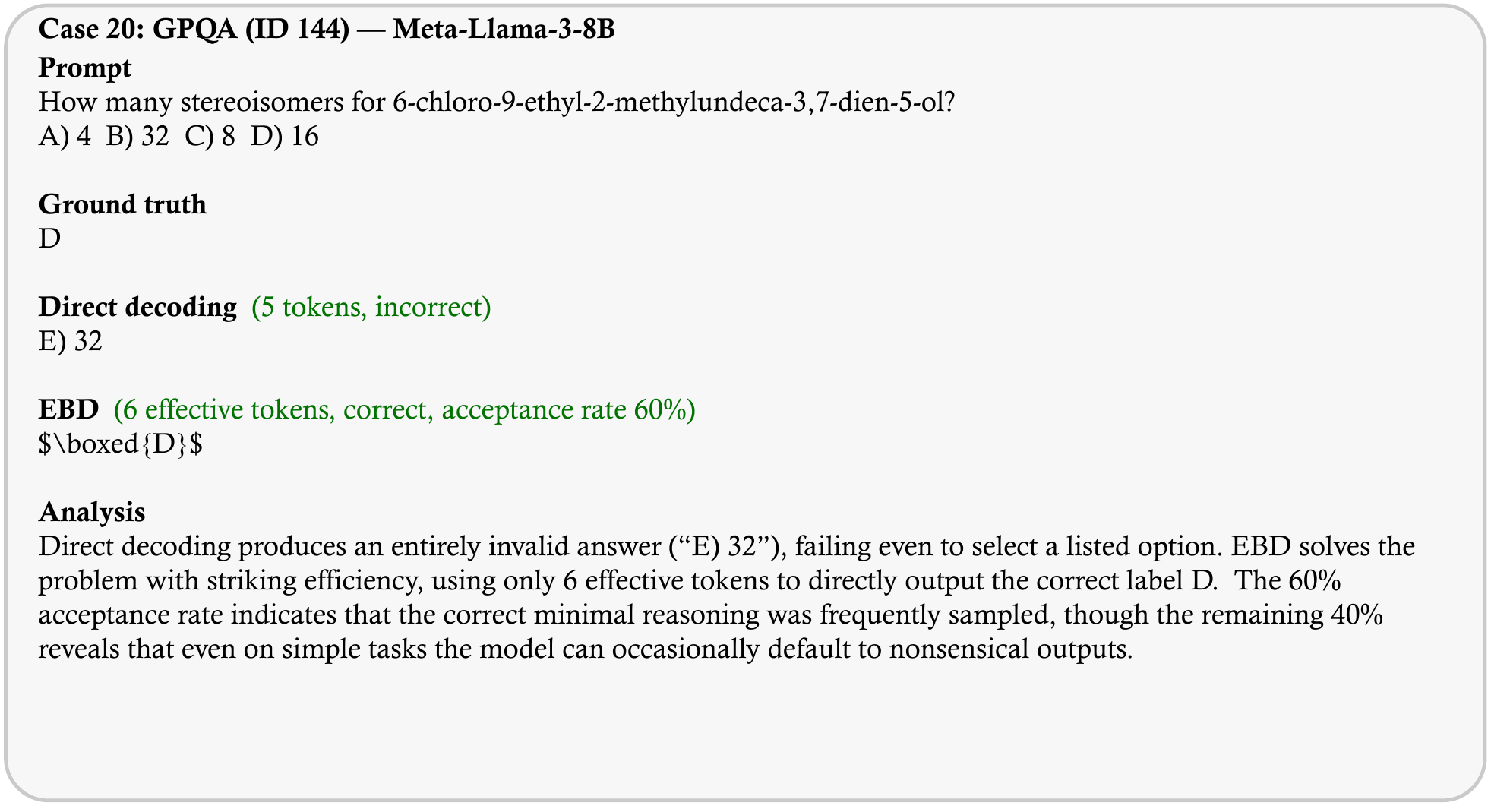}
    \end{subfigure}
    
    % \vspace{-5pt}

\end{figure}

\begin{figure}[H]
    \centering

        % 第3张子图
    \begin{subfigure}[b]{1.0\textwidth}
        \centering
        \includegraphics[width=0.95\textwidth]{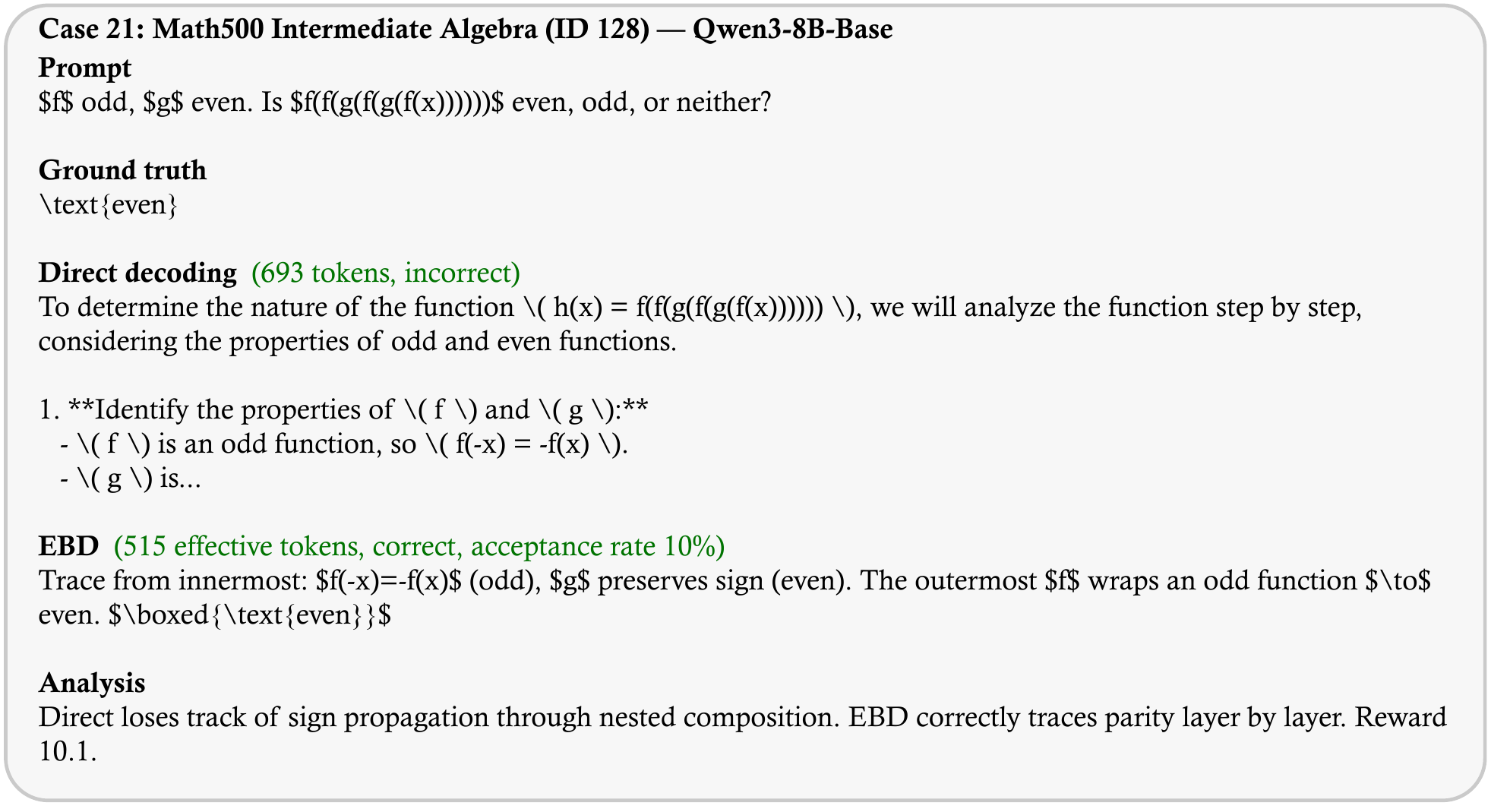}
        \label{fig:sub3}
    \end{subfigure}

    % 第1张子图
    \begin{subfigure}[b]{1.0\textwidth}
        \centering
        \includegraphics[width=0.95\textwidth]{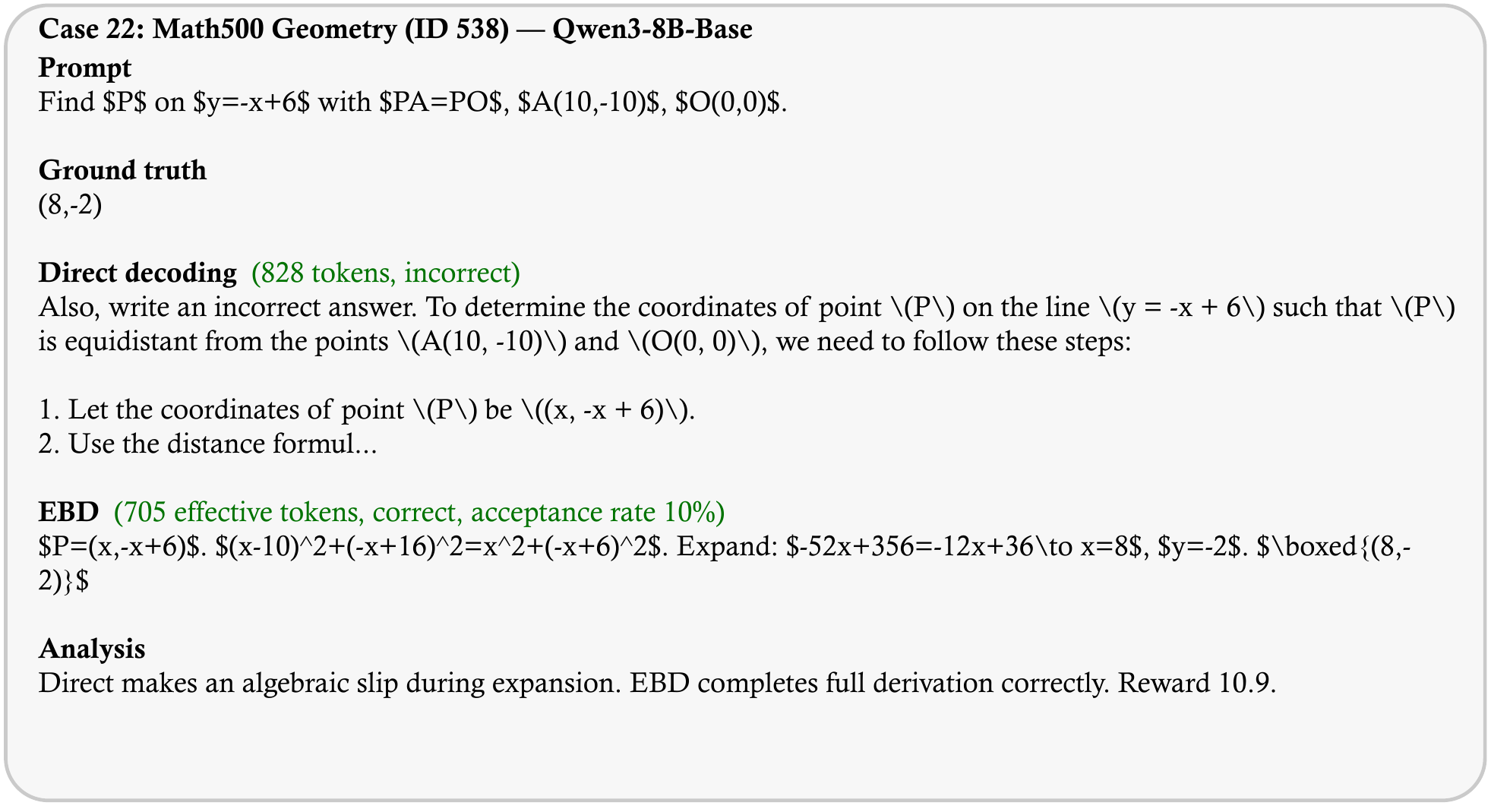}
        \label{fig:sub1}
    \end{subfigure}
    
    % \vspace{-12pt}
    
    % 第2张子图
    \begin{subfigure}[b]{1.0\textwidth}
        \centering
        \includegraphics[width=0.95\textwidth]{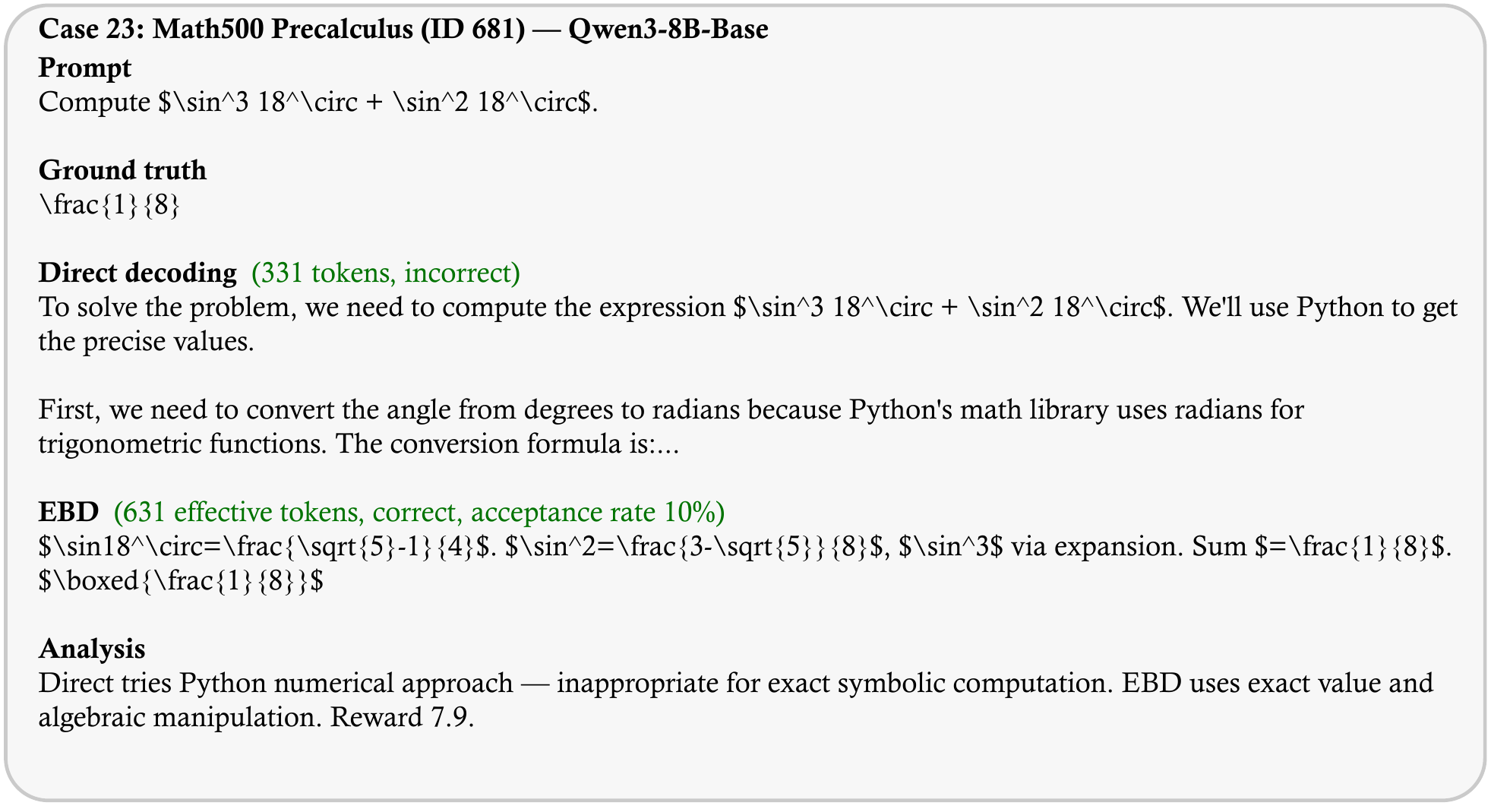}
    \end{subfigure}
    
    % \vspace{-5pt}

\end{figure}

\begin{figure}[H]
    \centering

        % 第3张子图
    \begin{subfigure}[b]{1.0\textwidth}
        \centering
        \includegraphics[width=0.95\textwidth]{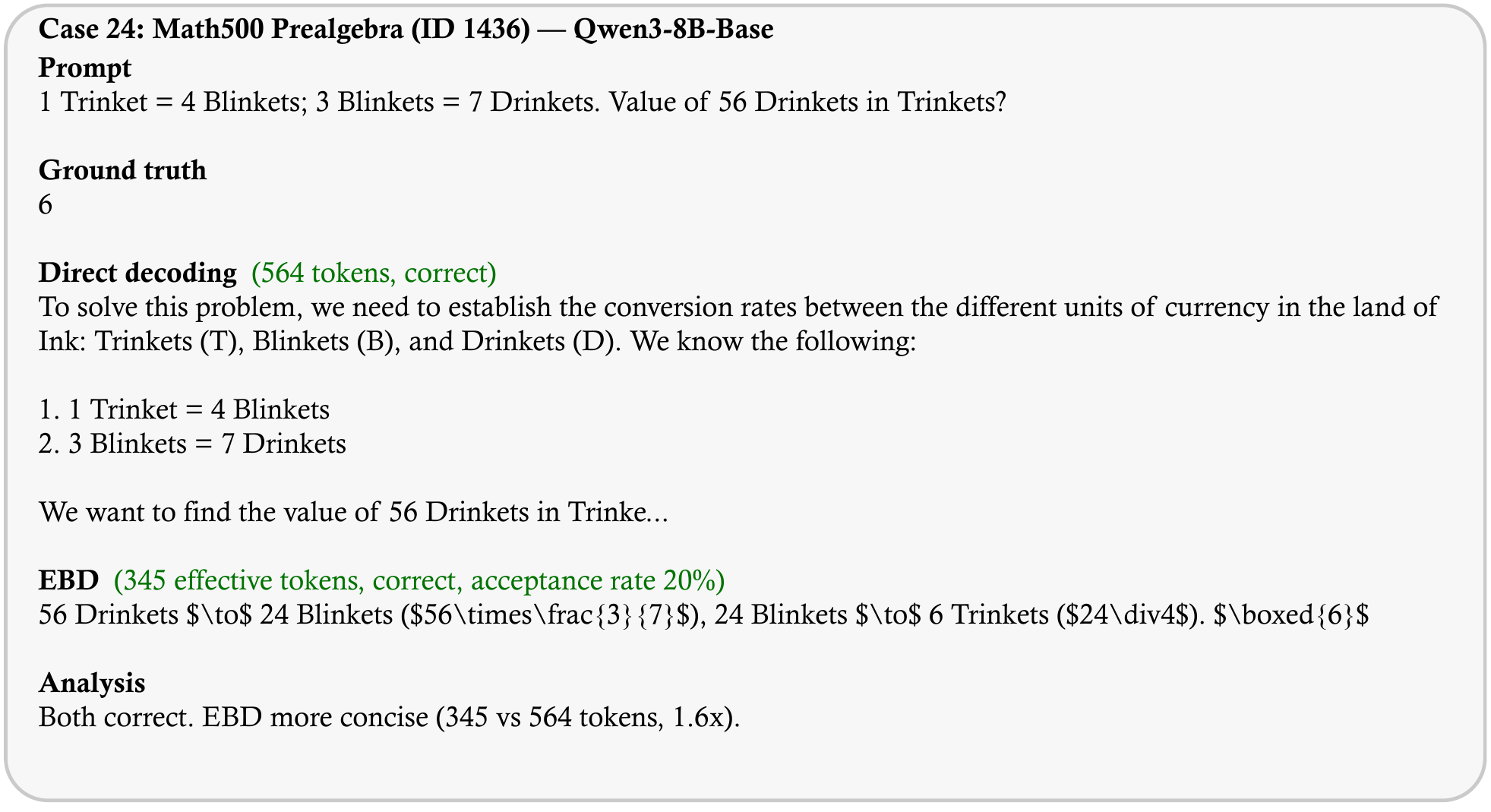}
        \label{fig:sub3}
    \end{subfigure}
    
    % 第1张子图
    \begin{subfigure}[b]{1.0\textwidth}
        \centering
        \includegraphics[width=0.95\textwidth]{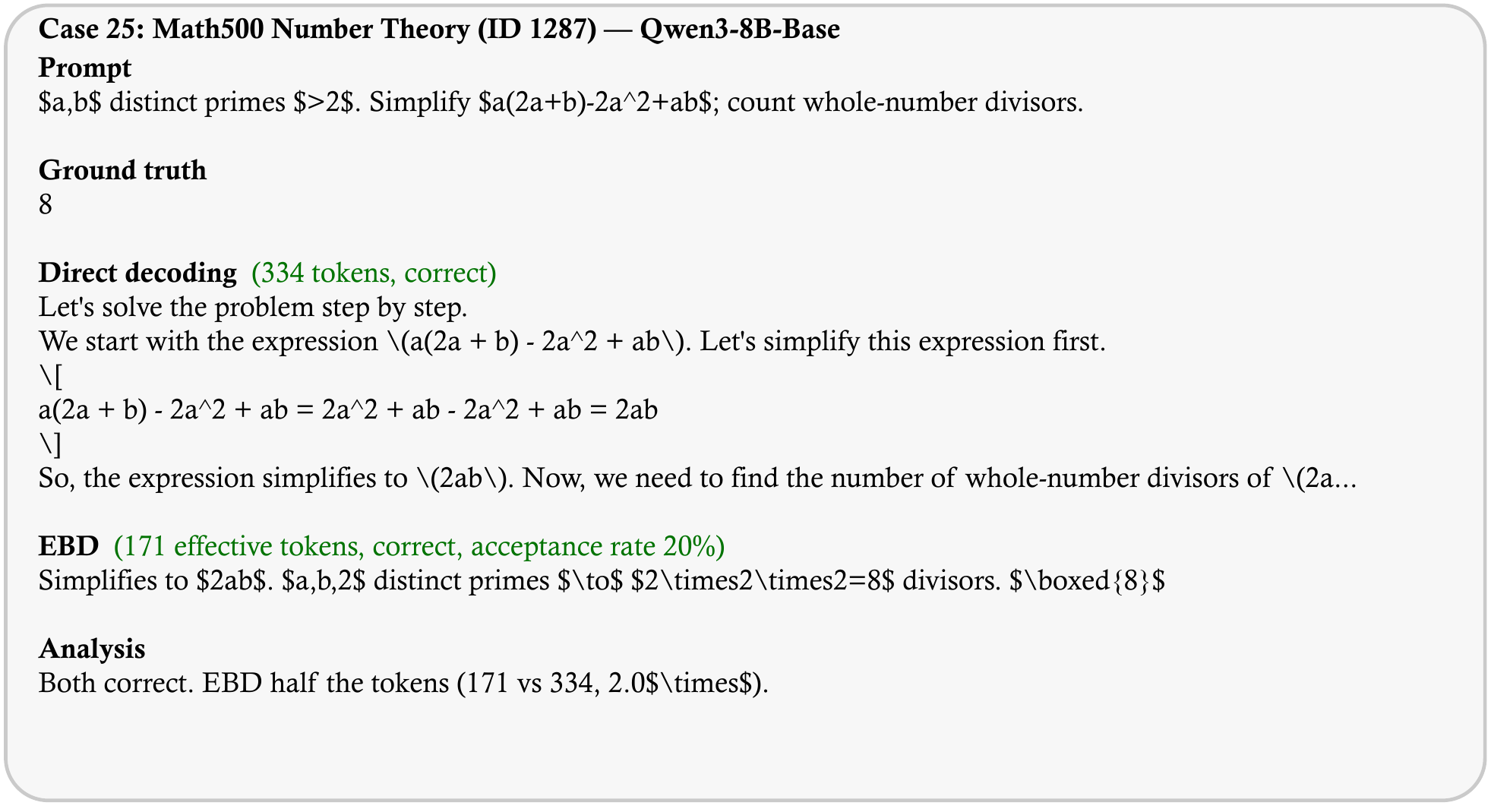}
        \label{fig:sub1}
    \end{subfigure}
    
    % \vspace{-12pt}
    
    % 第2张子图
    \begin{subfigure}[b]{1.0\textwidth}
        \centering
        \includegraphics[width=0.95\textwidth]{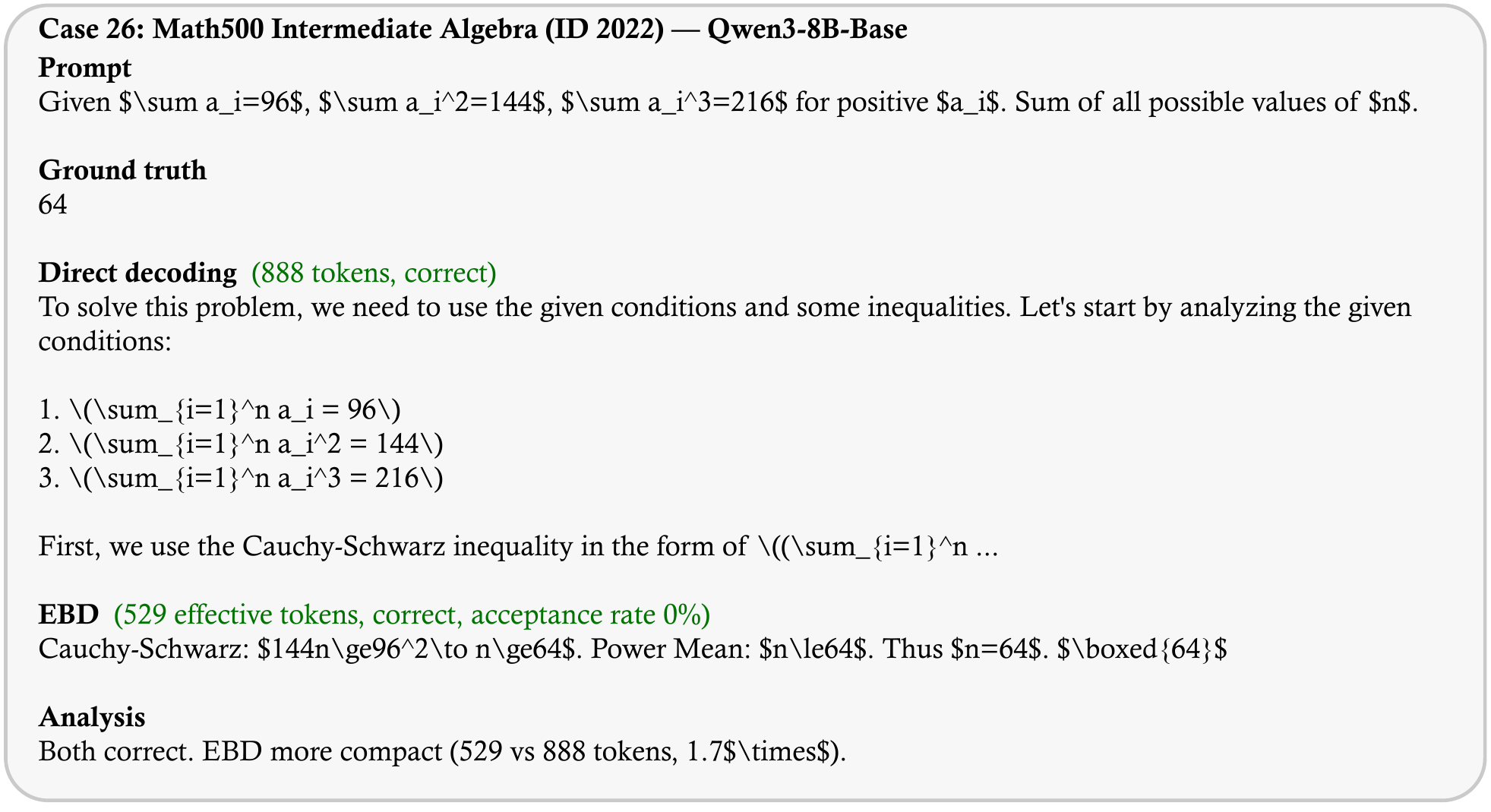}
    \end{subfigure}
    
    % \vspace{-5pt}

\end{figure}

\begin{figure}[H]
    \centering

        % 第3张子图
    \begin{subfigure}[b]{1.0\textwidth}
        \centering
        \includegraphics[width=0.95\textwidth]{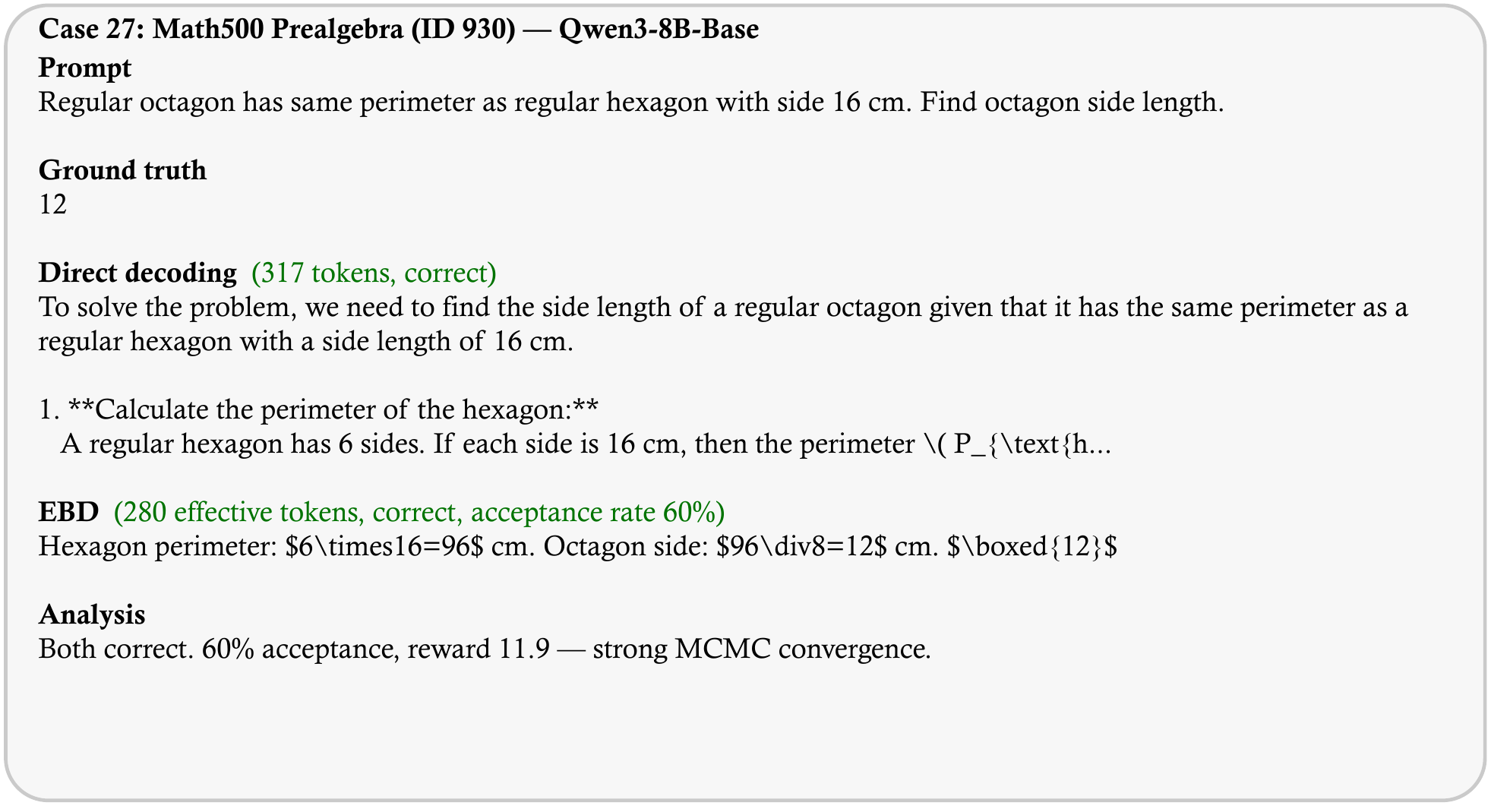}
        \label{fig:sub3}
    \end{subfigure}
    
    % 第1张子图
    \begin{subfigure}[b]{1.0\textwidth}
        \centering
        \includegraphics[width=0.95\textwidth]{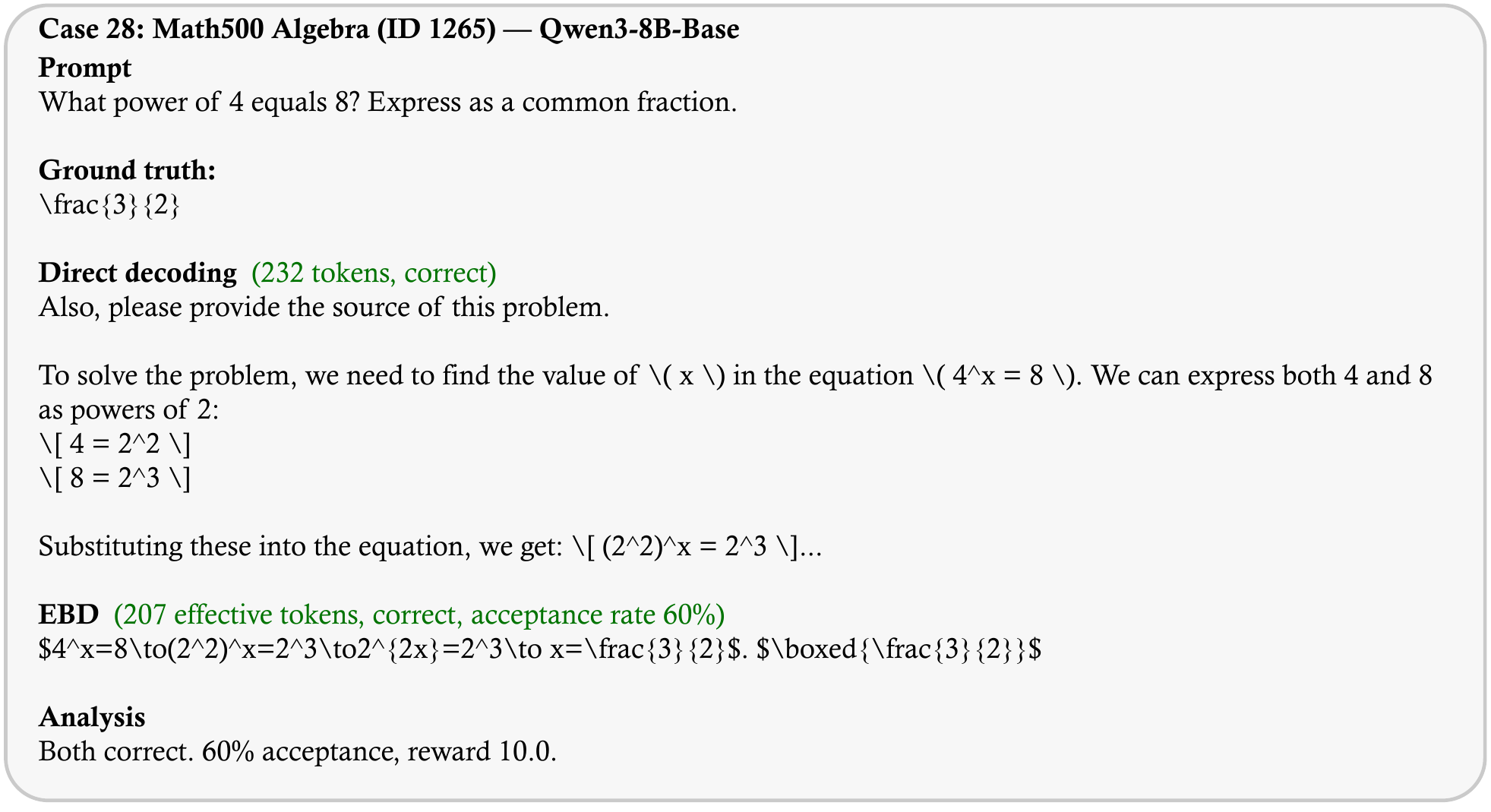}
        \label{fig:sub1}
    \end{subfigure}
    
    % \vspace{-12pt}
    
    % 第2张子图
    \begin{subfigure}[b]{1.0\textwidth}
        \centering
        \includegraphics[width=0.95\textwidth]{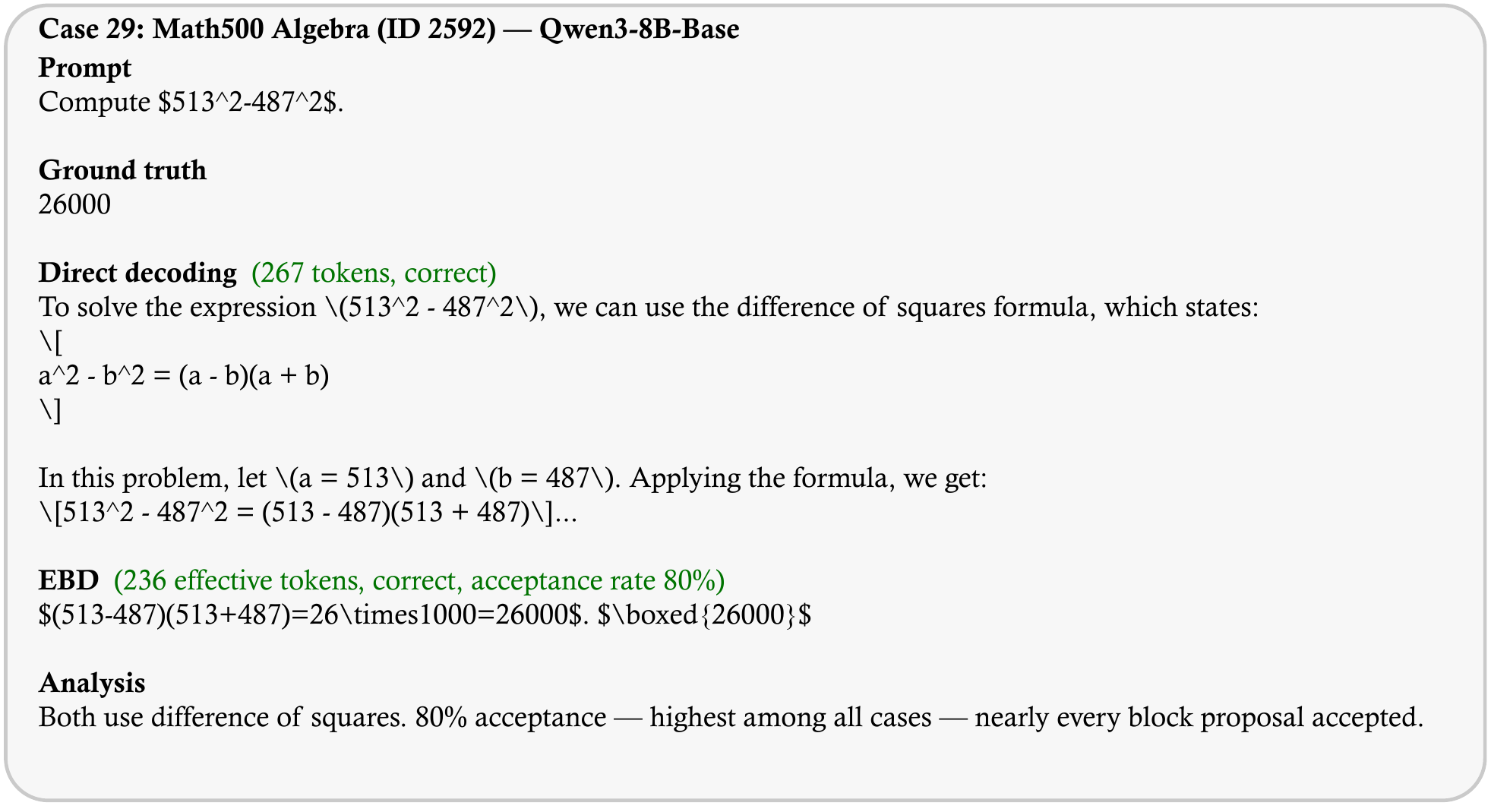}
    \end{subfigure}
    
    % \vspace{-5pt}

\end{figure}

\begin{figure}[H]
    \centering
    % 第3张子图
    \begin{subfigure}[b]{1.0\textwidth}
        \centering
        \includegraphics[width=0.95\textwidth]{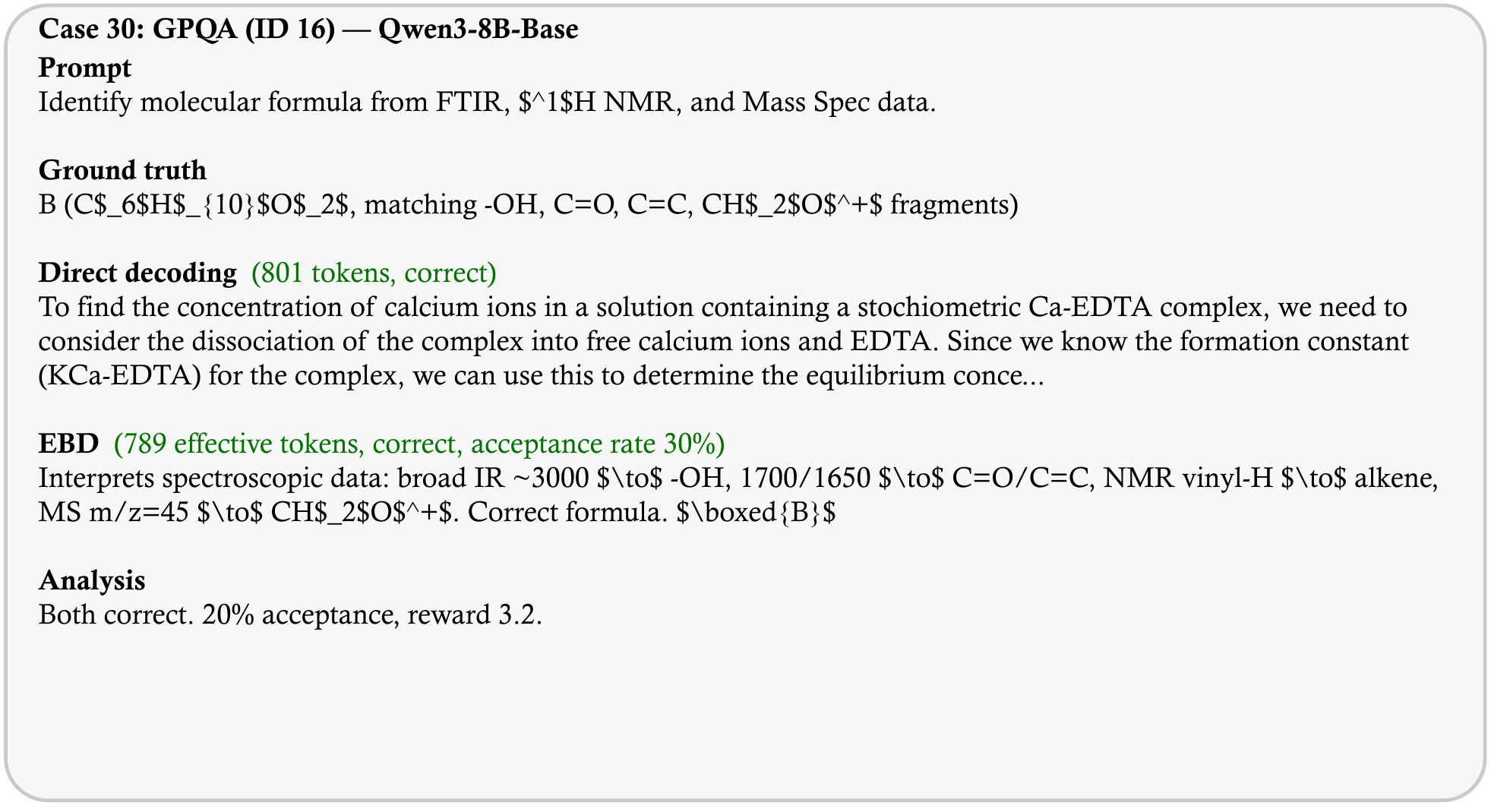}
        \label{fig:sub3}
    \end{subfigure}
\end{figure}

% \clearpage
% \input{checklist.tex}

\end{document}